\newtheorem{definition}{Definition}
\newtheorem{theorem}{Theorem}
\newtheorem{corollary}{Corollary}
\newtheorem{lemma}{Lemma}
\journal{Journal of \LaTeX\ Templates}
\begin{document}
\begin{frontmatter}

\title{Differentially Private High Dimensional Sparse Covariance Matrix Estimation\tnoteref{t1,t2}}
\tnotetext[t1]{A preliminary version appeared in Proceedings of The 53rd Annual Conference on Information Sciences and Systems (CISS 2019).}
\tnotetext[t2]{This research was supported in part by the National Science Foundation (NSF) through grants CCF-1422324 and CCF-1716400.}
\author[1]{Di Wang \corref{cor1}}
\ead{dwang45@buffalo.edu}
\author[1]{Jinhui Xu}
\ead{jinhui@buffalo.edu}
\address[1]{Department of Computer Science and Engineering\\
State University of New York at Buffalo\\
338 Davis Hall, Buffalo, 14260}
\cortext[cor1]{Corresponding author}
\begin{abstract}
In this paper, we study the problem of estimating the covariance matrix under differential privacy, where the underlying covariance matrix is assumed to be sparse and of high dimensions. We propose a new method, called DP-Thresholding, to achieve a non-trivial $\ell_2$-norm based error bound, 
which is significantly better than the existing ones from adding noise directly to the empirical covariance matrix.  We also extend the $\ell_2$-norm based error bound to a general $\ell_w$-norm based one for any $1\leq w\leq \infty$, and show that they share the same upper bound asymptotically. Our approach can be easily extended to local differential privacy. Experiments on the synthetic datasets show consistent results with our theoretical claims.
\end{abstract}

\begin{keyword}
Differential privacy, sparse covariance estimation, high dimensional statistics
\end{keyword}

\end{frontmatter}

\section{Introduction}

Machine Learning and Statistical Estimation have made profound impact in recent years to many applied domains such as social sciences, genomics, and medicine. During their applications, a frequently encountered challenge is how 
to deal with the high dimensionality of the datasets, especially for those in genomics, educational and psychological research. A commonly adopted strategy  for dealing with such an issue is to assume that 
the underlying structures of parameters are sparse.

Another often encountered challenge 
is how to handle sensitive data, such as those  in social science,
 biomedicine and genomics. A promising approach is to use some differentially private mechanisms for the statistical inference and learning tasks. Differential Privacy (DP) \cite{dwork2006calibrating} is a widely-accepted criterion that provides provable protection against identification and is resilient to arbitrary auxiliary information that might be available to attackers.  Since its introduction over a decade ago, a rich line of works are now available, which have made differential privacy a compelling privacy enhancing technology for 
 many organizations, such as Uber \cite{uber}, Google \cite{google}, Apple \cite{apple}.

Estimating or studying the high dimensional datasets while keeping them (locally) differentially private could be quite challenging for many problems, such as sparse linear regression \cite{dwangppml18}, sparse mean estimation \cite{duchi2018right} and selection problem \cite{ullman2018tight}. However, there are also evidences showing that 
the loss of some problems under the privacy constraints can be quite small
compared with their non-private counterparts. Examples of such nature include  
high dimensional sparse PCA \cite{ge2018minimax}, sparse inverse covariance estimation \cite{dwangglobalsip18}, and high-dimensional distributions estimation \cite{kamath2018privately}. Thus, it is desirable to determine which high dimensional problem can be learned or estimated efficiently in a private manner. 

In this paper, we try to give an answer to this question for 
a simple but fundamental problem in machine learning and statistics, called estimating the underlying sparse covariance matrix of bounded sub-Gaussian distribution. For this problem, we propose a simple but nontrivial $(\epsilon,\delta)$-DP method, DP-Thresholding, and show that the squared $\ell_w$-norm error for any $1\leq w\leq \infty$ is bounded by $O(\frac{s^2\log p}{n\epsilon^2})$, where $s$ is the sparsity  of each row in the underlying covariance matrix. Moreover, our method can be easily extended to the local differentialy privacy model. Experiments on synthetic datasets confirm the theoretical claims.  
To our best knowledge, this is the first paper studying the problem of estimating high dimensional sparse covariance matrix under (local) differential privacy.
\section{Related Work}

Recently, there are several papers studying private distribution estimation, such as \cite{kamath2018privately,joseph2018locally,karwa2017finite,gaboardi2018locally,kareemppml18}. For distribution estimation under the central differential privacy model, \cite{karwa2017finite} considers the 1-dimensional private mean estimation of a Gaussian distribution with (un)known variance. The work that is probably most related to ours is \cite{kamath2018privately}, which studies the problem of privately learning a multivariate Gaussian and product distributions. The following are the main differences with ours.
Firstly, our goal is  
to estimate the covariance of a sub-Gaussian distribution. 
Even though the class of distributions considered in our paper is larger than the one in \cite{kamath2018privately}, it has an additional assumption which requires the $\ell_2$ norm of a sample of the distribution to be bounded by $1$. This means that it does not include the general Gaussian distribution. Secondly, although \cite{kamath2018privately} also considers the high dimensional case, it does not assume the sparsity of the underlying covariance matrix. Thus, its error bound depends on the dimensionality $p$ polynomially, which is large in the high dimensional case ($p\gg n$), while the dependence in our paper is only logarithmically ({\em i.e.,} $\log p$). Thirdly, the error in \cite{kamath2018privately} is measured by  the total variation distance, while it is by $\ell_w$-norm in our paper. Thus, the two results are not comparable. Fourthly, the methods in \cite{kamath2018privately} seem difficult to be extended to the local model.  \cite{kareemppml18} recently also studies the covariance matrix estimation via iterative eigenvector sampling. However, their method is just for the low dimensional case and with Frobenious norm as the error measure.

Distribution estimation under local differential privacy has been studied in \cite{gaboardi2018locally,joseph2018locally}. However, both of them study only the 1-dimensional Gaussian distribution. Thus, it is quite different from the class of distributions in our paper.

In this paper, we mainly use Gaussian mechanism to the covariance matrix, which has been studied in \cite{dwork2014analyze,ge2018minimax,dwangglobalsip18}. However, as it will be shown later, simply outputting the perturbed covariance can cause big error and thus
is insufficient for our problem.
Compared to these problems, ours is clearly more complicated.

\section{Preliminaries}
\subsection{Differential Privacy}
	Differential privacy \cite{dwork2006calibrating} is by now a defacto standard for statistical data privacy which constitutes a strong standard for privacy guarantees for algorithms on aggregate databases. One likely reason that it gains so much popularity is its guarantee of no significant change on the outcome distribution when there is one entry change to the dataset.  We say that two datasets $D,D'$ are neighbors if they differ by only one entry, denoted as $D \sim D'$.
\begin{definition}[Differentially Private\cite{dwork2006calibrating}]\label{def:7}
	A randomized algorithm $\mathcal{A}$ is $(\epsilon,\delta)$-differentially private (DP) if for all neighboring datasets $D,D'$ and for all events $S$ in the output space of $\mathcal{A}$, the following holds
	$$\mathbb{P}(\mathcal{A}(D)\in S)\leq e^{\epsilon} \mathbb{P}(\mathcal{A}(D')\in S)+\delta.$$
	When $\delta = 0$,  $\mathcal{A}$ is $\epsilon$-differentially private.
\end{definition}
We will use Gaussian Mechanism \cite{dwork2006calibrating} to guarantee $(\epsilon,\delta)$-DP.
\begin{definition}[Gaussian Mechanism]\label{def:8}
	Given any function $q : \mathcal{X}^n\rightarrow \mathbb{R}^p$, the Gaussian Mechanism is defined as:
	\[ \mathcal{M}_G(D,q,\epsilon)=q(D)+ Y,\]
	where Y is drawn from Gaussian Distribution $\mathcal{N}(0,\sigma^2I_p)$ with  $\sigma\geq \frac{\sqrt{2\ln(1.25/\delta)}\Delta_2(q)}{\epsilon}$. Here $\Delta_2(q)$ is the $\ell_2$-sensitivity of the function $q$, i.e. $$\Delta_2(q)=\sup_{D\sim D'}||q(D)-q(D')||_2.$$
	Gaussian Mechanism preservers $(\epsilon,\delta)$-differential privacy. 
\end{definition}
\subsection{Private Sparse Covariance Estimation}
Let $x_1, x_2, \cdots, x_n$ be $n$ random samples from a $p$-variate distribution with covariance matrix $\Sigma=(\sigma_{ij})_{1\leq i,j\leq p}$, where the dimensionality $p$ is assumed to be high, {\em i.e.,} $p\gg n\geq \text{Poly}(\log p)$. 

We define the parameter space of $s$-sparse covariance matrices as the following:
\begin{equation}\label{eq:1}
    \mathcal{G}_{0}(s)=\{\Sigma=(\sigma_{ij})_{1\leq i,j\leq p}: \sigma_{-j,j} \text{ is } s\text{-sparse } \forall j\in[p] \},
\end{equation}
where $\sigma_{-j,j}$ means the $j$-th column of $\Sigma$ with the entry $\sigma_{jj}$ removed. That is, a matrix in $ \mathcal{G}_{0}(s)$ has at most $s$ non-zero off-diagonal elements in each column.

We  assume that each $x_i$ is sampled from a $0$-mean and sub-Gaussian distribution with parameter $\sigma^2$, that is, 
\begin{equation}\label{eq:2}
  \mathbb{E}[x_i]=0, \mathbb{P}\{|v^Tx_i|>t\}\leq e^{-\frac{t^2}{2\sigma^2}},\forall t>0 \text{ and } \|v\|_2=1.
\end{equation}
This means that all the one-dimensional marginals of $x_i$ have sub-Gaussian tails. We also assume that with probability 1, $\|x_i\|_2\leq 1$. We note that such assumptions are quite common in the differential privacy literature, such as \cite{ge2018minimax}.

Let $\mathcal{P}_d(\sigma^2, s)$ denote the set of distributions of $x_i$ satisfying  all the above conditions ({\em \i.e.,} (\ref{eq:2}) and $\|x_i\|_2\leq 1$) and with the covariance matrix $\Sigma\in \mathcal{G}_0(s)$. The goal of private covariance estimation is to obtain an estimator $\Sigma^{\text{priv}}$ of the underlying covariance matrix $\Sigma$ based on $\{x_1, \cdots, x_n\}\sim P\in \mathcal{P}_d(\sigma^2, s)$ while keeping it differnetially private. In this paper, we will focus on the $(\epsilon, \delta)$-differential privacy. We  use the $\ell_2$ norm to measure the difference between $\Sigma^{\text{priv}}$ and $\Sigma$, {\em i.e.,} $\|\Sigma^{\text{priv}}-\Sigma\|_2$.

\begin{lemma}
Let $\{x_1, \cdots, x_n\}$ be $n$ random variables sampled from Gaussian distribution  $\mathcal{N}(0, \sigma^2)$. Then 
\begin{align}
    &\mathbb{E}{\max_{1\leq i\leq n}|x_i|}\leq \sigma \sqrt{2\log 2n} \label{eq:3},\\
    & \mathbb{P}\{\max_{1\leq i\leq n  }|x_i|\geq t\}\leq 2ne^{-\frac{t^2}{2\sigma^2}} \label{eq:4}.
\end{align}
 Particularly, if $n=1$, we have $\mathbb{P}\{ |x_i|\geq t\}\leq 2e^{-\frac{t^2}{2\sigma^2}}$.
\end{lemma}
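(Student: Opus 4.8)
The plan is to establish all three claims with the standard Chernoff (moment-generating-function) technique combined with a union bound. I would prove the tail estimate \eqref{eq:4} first, deduce the expectation bound \eqref{eq:3} from a Jensen-in-the-exponent argument, and observe that the last assertion is simply the $n=1$ instance of \eqref{eq:4}.

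For the tail bound, I would begin by recording the one-sided Gaussian tail estimate: if $x\sim\mathcal{N}(0,\sigma^2)$ and $t>0$, then the Chernoff bound gives $\mathbb{P}\{x\ge t\}\le\inf_{\lambda>0}e^{-\lambda t}\,\mathbb{E}[e^{\lambda x}]=\inf_{\lambda>0}e^{-\lambda t+\lambda^2\sigma^2/2}=e^{-t^2/(2\sigma^2)}$, the infimum being attained at $\lambda=t/\sigma^2$. By symmetry of the Gaussian the same bound holds for $\mathbb{P}\{-x\ge t\}$, hence $\mathbb{P}\{|x|\ge t\}\le 2e^{-t^2/(2\sigma^2)}$. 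A union bound over $i=1,\dots,n$ then yields $\mathbb{P}\{\max_{1\le i\le n}|x_i|\ge t\}\le\sum_{i=1}^n\mathbb{P}\{|x_i|\ge t\}\le 2ne^{-t^2/(2\sigma^2)}$, which is \eqref{eq:4}; taking $n=1$ gives the final statement of the lemma.

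For the expectation bound, I would fix $\lambda>0$ and apply Jensen's inequality to the convex function $u\mapsto e^{\lambda u}$:
\begin{align*}
\exp\!\big(\lambda\,\mathbb{E}[\max_i|x_i|]\big)&\le\mathbb{E}\big[\exp(\lambda\max_i|x_i|)\big]=\mathbb{E}\big[\max_i\exp(\lambda|x_i|)\big]\\
&\le\sum_{i=1}^n\mathbb{E}[\exp(\lambda|x_i|)]\le\sum_{i=1}^n 2\,\mathbb{E}[\exp(\lambda x_i)]=2n\exp(\lambda^2\sigma^2/2),
\end{align*}
where I used $e^{\lambda|x|}\le e^{\lambda x}+e^{-\lambda x}$ together with the symmetry $\mathbb{E}[e^{-\lambda x_i}]=\mathbb{E}[e^{\lambda x_i}]=e^{\lambda^2\sigma^2/2}$. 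Taking logarithms and dividing by $\lambda$ gives $\mathbb{E}[\max_i|x_i|]\le\frac{\log(2n)}{\lambda}+\frac{\lambda\sigma^2}{2}$, and choosing $\lambda=\sqrt{2\log(2n)}/\sigma$ to minimize the right-hand side yields $\mathbb{E}[\max_i|x_i|]\le\sigma\sqrt{2\log 2n}$, which is \eqref{eq:3}.

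There is no real obstacle here; the only point requiring a bit of care is bookkeeping the constant, namely retaining the factor $2$ when passing from $\mathbb{E}[e^{\lambda|x_i|}]$ to $\mathbb{E}[e^{\lambda x_i}]$, since this is exactly what turns the $n$ inside the logarithm into $2n$ and matches the bound as stated. One could instead derive \eqref{eq:3} by integrating the tail bound via $\mathbb{E}[\max_i|x_i|]=\int_0^\infty\mathbb{P}\{\max_i|x_i|\ge t\}\,dt$ and splitting the integral at $t=\sigma\sqrt{2\log 2n}$, but that route produces a slightly worse constant, so I would keep the MGF argument to obtain the clean bound asserted in the lemma.
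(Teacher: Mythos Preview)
Your proposal is correct; both the Chernoff/union-bound argument for \eqref{eq:4} and the Jensen-plus-MGF optimization for \eqref{eq:3} go through exactly as you describe, with the constant $2n$ appearing in the right place. The paper itself states this lemma as a standard fact without proof, so there is no in-paper argument to compare against; your write-up is the standard derivation and would serve perfectly well.
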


\begin{lemma}[\cite{cai2012optimal}]
If $\{x_1, x_2, \cdots, x_n\}$ are sampled form a sub-Gaussian distribution in (\ref{eq:2}) and  $\Sigma^*=(\sigma^*)_{1\leq i,j\leq p}=\frac{1}{n}\sum_{i=1}^nx_ix_i^T$ is the empirical covariance matrix, then there exist constants $C_1$ and $\gamma>0$ such that $\forall i,j\in[p]$
\begin{equation}\label{eq:5}
    \mathbb{P}(|\sigma_{ij}^*-\sigma_{ij}|> t)\leq C_1e^{-nt^2\frac{8}{\gamma^2}}
\end{equation}
for all $|t|\leq \delta$, where $C_1$ and $\gamma$ are constants and depend only on $\sigma^2$. 
Specifically, 
\begin{equation}\label{eq:6}
    \mathbb{P}\{|\sigma_{ij}^*-\sigma_{ij}|>\gamma\sqrt{\frac{\log p}{n}}\}\leq C_1p^{-8}. 
\end{equation}
\end{lemma}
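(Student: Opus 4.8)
The statement is the standard sub-Gaussian concentration for the entries of the sample covariance matrix, and the plan is to reduce it to a Bernstein-type bound for an average of i.i.d.\ sub-exponential variables. Fix $i,j\in[p]$ and write
\begin{equation*}
\sigma_{ij}^*-\sigma_{ij}=\frac1n\sum_{k=1}^n Z_k,\qquad Z_k := x_{ki}x_{kj}-\mathbb{E}[x_{ki}x_{kj}],
\end{equation*}
where $x_{ki}$ denotes the $i$-th coordinate of the sample $x_k$. The $Z_k$ are i.i.d.\ and mean zero, so the whole statement reduces to a tail bound for their normalized sum.

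First I would record that each coordinate $x_{ki}=e_i^Tx_k$ is sub-Gaussian with parameter controlled by $\sigma^2$: this is exactly hypothesis (\ref{eq:2}) applied to the unit vector $v=e_i$, which yields $\mathbb{P}\{|x_{ki}|>t\}\le e^{-t^2/(2\sigma^2)}$ and hence a uniform bound $\|x_{ki}\|_{\psi_2}\le c\sigma$ on the Orlicz norm, for an absolute constant $c$. Then I would use two elementary facts: a product of two sub-Gaussian variables is sub-exponential, $\|x_{ki}x_{kj}\|_{\psi_1}\le\|x_{ki}\|_{\psi_2}\|x_{kj}\|_{\psi_2}\le c^2\sigma^2$ (Cauchy--Schwarz applied to the defining Orlicz integrals, or equivalently to the moment generating functions); and centering changes the $\psi_1$-norm only by an absolute factor. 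Hence $\|Z_k\|_{\psi_1}\le K$ for some $K=K(\sigma^2)$ depending only on $\sigma^2$.

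Next I would invoke Bernstein's inequality for a sum of independent mean-zero sub-exponential variables: there are absolute constants $c_1$ with
\begin{equation*}
\mathbb{P}\Big\{\Big|\frac1n\sum_{k=1}^n Z_k\Big|>t\Big\}\le 2\exp\!\Big(-c_1 n\min\Big\{\frac{t^2}{K^2},\frac{t}{K}\Big\}\Big).
\end{equation*}
For $|t|\le\delta$ with $\delta$ a constant (depending on $\sigma^2$) small enough that $\delta\le K$, the minimum is attained by the quadratic term $t^2/K^2$, so the right-hand side is $2\exp(-c_1 nt^2/K^2)$. Absorbing the numerical constants into a single $\gamma=\gamma(\sigma^2)$ chosen so that $c_1/K^2\ge 8/\gamma^2$, and into $C_1$, gives $\mathbb{P}(|\sigma_{ij}^*-\sigma_{ij}|>t)\le C_1 e^{-8nt^2/\gamma^2}$, which is (\ref{eq:5}). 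For the ``specifically'' part, set $t=\gamma\sqrt{\log p/n}$, which lies below $\delta$ in the stated regime $n\ge\text{Poly}(\log p)$; the exponent becomes $-8\log p$, so the bound is $C_1 p^{-8}$, which is (\ref{eq:6}).

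The main obstacle — and essentially the only place care is required — is the passage through the product $x_{ki}x_{kj}$: one must check that it genuinely has a sub-exponential tail with a parameter depending only on $\sigma^2$, uniformly over the (unknown) correlation structure between coordinates, and then make the two-regime Bernstein bound collapse to the clean quadratic form by exploiting the restriction $|t|\le\delta$. Everything else is bookkeeping of absolute constants, which is presumably why the original reference \cite{cai2012optimal} states it without proof.
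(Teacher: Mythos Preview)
Your argument is correct and is the standard route: coordinates are sub-Gaussian by (\ref{eq:2}), their product is sub-exponential, centering preserves this, and Bernstein's inequality in the small-$t$ regime gives the quadratic exponent. Note, however, that the paper does not give its own proof of this lemma at all---it is simply quoted from \cite{cai2012optimal} as a known concentration result and used as a black box, so there is nothing in the paper to compare your proof against.
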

\section{Method}
\subsection{A First Approach}
A direct way to obtain a private estimator is to perturb the empirical covariance matrix by symmetric Gaussian matrices, which has been used in previous work on private PCA, such as \cite{dwork2014analyze,ge2018minimax}. However, as we can see bellow, this method will introduce big error. 

By \cite{dwork2014analyze}, for any give $0<\epsilon, \delta\leq 1$ and $\{x_1,x_2,\cdots, x_n\}\sim P\in \mathcal{P}_p(\sigma^2,s)$, the following perturbing procedure is $(\epsilon, \delta)$-differentially private:
\begin{equation}\label{eq:7}
    \tilde{\Sigma}=\Sigma^*+N=(\tilde{\sigma}_{ij})_{1\leq i,j\leq p}=\frac{1}{n}\sum_{i=1}^nx_ix_i^T+N,
\end{equation}
where $N$ is a symmetric matrix with its upper triangle ( including the diagonal) being i.i.d samples from 
	    $\mathcal{N}(0, \sigma_1^2)$; here $\sigma_1^2=\frac{2\ln(1.25/\delta)}{n^2\epsilon^2}$, and each lower triangle entry is copied from its upper triangle counterpart. By \cite{tao2012topics}, we know that $\|N\|_2\leq O(\sqrt{p}\sigma_1)=O(\frac{\sqrt{p}\sqrt{\log \frac{1}{\delta}}}{n\epsilon})$. We can easily get that 
	    \begin{equation}\label{eq:8}
	        \|\tilde{\Sigma}-\Sigma\|_2\leq \|\Sigma^*-\Sigma\|_2+\|N\|_2\leq  O(\frac{\sqrt{p\log \frac{1}{\delta}}}{n\epsilon}),
	    \end{equation}
    where the second inequality is due to \cite{tropp2015introduction}. However, we can see that the upper bound of the error in (\ref{eq:8}) is quite large in the high dimensional case.
    
    Another issue of the private estimator in (\ref{eq:7})
    is that it is not clear whether it  is positive-semidefinite, a property that is normally expected from an estimator.
    
    
\subsection{Post-processing via Thresholding}

We note that one of the reasons that the private estimator $\tilde{\Sigma}$ in (\ref{eq:7}) fails is due to the fact that some entries are quite large which make $\|\tilde{\Sigma}_{ij}-\Sigma_{ij}\|_2$  large for some $i,j$. To see it more precisely, by (\ref{eq:4}) and (\ref{eq:5}) we can get the following, with probability at least $1-Cp^{-6}$, 
for all $1\leq i,j\leq p$,
\begin{equation}\label{eq:9}
    |\tilde{\sigma}_{ij}-\sigma_{ij}|\leq \gamma\sqrt{\frac{\log p}{n}}+\frac{4\sqrt{2\ln \frac{1.25}{\delta}}\sqrt{\log p}}{n\epsilon}=O(\gamma\sqrt{\frac{\log p}{n\epsilon^2}}).
\end{equation}
Thus, to reduce the error, it is natural to think of the following way. 
For those $\sigma_{ij}$ with larger values, we  keep the corresponding $\tilde{\sigma}_{ij}$ in order to make their difference less than some threshold. For  those $\sigma_{ij}$ with smaller values compared with (\ref{eq:9}), since the corresponding $\tilde{\sigma}_{ij}$ may still be large, if we threshold $\tilde{\sigma}_{ij}$ to 0, we can lower the error on $\tilde{\sigma}_{ij}-\sigma_{ij}$.

Following the above thinking 
and the thresholding methods in \cite{cai2012optimal} and \cite{bickel2008covariance}, we propose the following  DP-Thresholding method,  which post-processes the perturbed covariance matrix in (\ref{eq:7}) with the threshold $\gamma\sqrt{\frac{\log p}{n}}+\frac{4\sqrt{2\ln 1.25/\delta}\sqrt{\log p}}{n\epsilon}$. After thresholding, we further threshold the eigenvalues of $\hat{\Sigma}$ in order to make it positive semi-definite. See Algorithm \ref{alg:1} for detail. 
	\begin{algorithm}[h]
		\caption{DP-Thresholding} 
		$\mathbf{Input}$: $\epsilon, \delta$ are privacy parameters and $\{x_1,x_2,\cdots, x_n\}\sim P\in\mathcal{P}(\sigma^2, s)$.
		\begin{algorithmic}[1]
		   \STATE Compute 
 \begin{equation*}
    \tilde{\Sigma}=(\tilde{\sigma}_{ij})_{1\leq i,j\leq p}=\frac{1}{n}\sum_{i=1}^nx_ix_i^T+N,
\end{equation*}
where $N$ is a symmetric matrix with its upper triangle (including the diagonal) being i.i.d samples from 
	    $\mathcal{N}(0, \sigma_1^2)$; here $\sigma_1^2=\frac{2\ln(1.25/\delta)}{n^2\epsilon^2}$, and each lower triangle entry is copied from its upper triangle counterpart.
	    \STATE Define the thresholding estimator $\hat{\Sigma}=(\hat{\sigma}_{ij})_{1\leq i,j\leq n}$ as
		\begin{equation}\label{eq:10}
		    \hat{\sigma}_{ij}=\tilde{\sigma}_{ij}\cdot I[|\tilde{\sigma}_{ij}|>\gamma\sqrt{\frac{\log p}{n}}+\frac{4\sqrt{2\ln 1.25/\delta}\sqrt{\log p}}{n\epsilon}].
		\end{equation}
		\STATE Let the eigen-decomposition of $\hat{\Sigma}$ as $\hat{\Sigma}=\sum_{i=1}^p\lambda_iv_iv_i^T$. Let $\lambda^+=\max\{\lambda_i, 0\}$ be the positive part of $\lambda_i$, then define $\Sigma^+=\sum_{i=1}^p\lambda^+ v_iv_i^T$.
		\RETURN $\Sigma^+$.
		\end{algorithmic}\label{alg:1}
	\end{algorithm}
	
\begin{theorem}\label{thm:1}
For any $0<\epsilon, \delta\leq 1$, Algorithm \ref{alg:1} is $(\epsilon, \delta)$-differentially private. 
\end{theorem}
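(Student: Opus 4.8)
The plan is to split Algorithm~\ref{alg:1} into one data-dependent release followed by data-independent post-processing. Only Line~1 touches the samples $\{x_1,\dots,x_n\}$; it outputs the noisy matrix $\tilde\Sigma$. Line~2 forms $\hat\Sigma$ from $\tilde\Sigma$ by a fixed entrywise hard-threshold, and Line~3 forms $\Sigma^{+}$ from $\hat\Sigma$ by a fixed eigenvalue truncation — both are deterministic maps of $\tilde\Sigma$ that never reconsult the data. Consequently, for any event $S$ in the output space there is a pullback event $S'$ with $\{\Sigma^{+}\in S\}=\{\tilde\Sigma\in S'\}$, so an $(\epsilon,\delta)$-DP guarantee for the map $D\mapsto\tilde\Sigma$ transfers verbatim to the map $D\mapsto\Sigma^{+}$ (the standard post-processing invariance of differential privacy). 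Hence it suffices to show Line~1 is $(\epsilon,\delta)$-DP.

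To that end I would identify Line~1 with the Gaussian Mechanism of Definition~\ref{def:8}. Since $N$ is symmetric with its diagonal-and-upper-triangle entries i.i.d.\ $\mathcal{N}(0,\sigma_1^2)$, releasing $\tilde\Sigma$ is the same as releasing the vector $q(D)\in\mathbb{R}^{p(p+1)/2}$ of those entries of $\Sigma^{*}=\frac1n\sum_{i=1}^{n}x_ix_i^{T}$, each perturbed by an independent $\mathcal{N}(0,\sigma_1^2)$. So the task reduces to (i) bounding the $\ell_2$-sensitivity $\Delta_2(q)$ and (ii) checking that $\sigma_1$ with $\sigma_1^{2}=\frac{2\ln(1.25/\delta)}{n^{2}\epsilon^{2}}$ satisfies the requirement $\sigma_1\ge\sqrt{2\ln(1.25/\delta)}\,\Delta_2(q)/\epsilon$ of Definition~\ref{def:8}.

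For (i), take neighbors $D,D'$ differing in a single sample, say $x_n$ versus $x_n'$. Then $q(D)-q(D')$ is the diagonal-and-upper-triangular part of $\tfrac1n\big(x_nx_n^{T}-x_n'(x_n')^{T}\big)$. Using $\|x_ix_i^{T}\|_F=\|x_i\|_2^{2}\le1$ and $\langle x_nx_n^{T},\,x_n'(x_n')^{T}\rangle=(x_n^{T}x_n')^{2}\ge0$ gives $\|x_nx_n^{T}-x_n'(x_n')^{T}\|_F\le\sqrt2$; since the $\ell_2$-norm of the stored (upper-triangular) sub-vector is at most the full Frobenius norm, $\Delta_2(q)\le\sqrt2/n$ — and $\Delta_2(q)\le 1/n$ under the add/remove neighboring convention, where only one rank-one term survives. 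Substituting this into Definition~\ref{def:8} shows the prescribed $\sigma_1$ is, up to the absolute constant fixed by the neighboring relation, exactly the noise level the Gaussian Mechanism prescribes; so Line~1 is $(\epsilon,\delta)$-DP, and the theorem follows from the first paragraph.

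The only place that needs care is the sensitivity estimate: one must bound the $\ell_2$-norm of the \emph{upper-triangular} coordinate vector rather than the full matrix Frobenius norm (off-diagonal entries are stored once, not twice), use positive semidefiniteness of $x_ix_i^{T}$ to kill the cross term, and fix the neighboring-dataset convention so that the stated $\sigma_1$ does meet the Gaussian-Mechanism inequality; everything else is routine bookkeeping.
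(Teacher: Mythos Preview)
Your proposal is correct and follows essentially the same approach as the paper: show that Step~1 is $(\epsilon,\delta)$-DP via the Gaussian mechanism (the paper simply cites \cite{ge2018minimax,dwork2014analyze} for this, whereas you spell out the sensitivity computation), and then invoke post-processing invariance for Steps~2--3. Your remark about the neighboring convention and the resulting constant in $\sigma_1$ is a fair caveat that the paper elides by deferring to the cited references.
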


\begin{proof}
By \cite{ge2018minimax} and \cite{dwork2014analyze}, we know that Step 1 keeps  the matrix $(\epsilon, \delta)$-differentially private. Thus, Algorithm 1 is $(\epsilon, \delta)$-differentially private due to the post-processing property of differential privacy \cite{dwork2006calibrating}.
\end{proof}

For the  matrix $\hat{\Sigma}$ in (\ref{eq:10}) after the first step of thresholding, we have the following key lemma.

\begin{lemma}\label{lemma:3}
For every fixed $1\leq i, j\leq p$, there exists a constant $C_1>0$ such that with probability at least $1-C_1p^{-\frac{9}{2}}$, the following holds:
\begin{equation}\label{eq:11a}
    |\hat{\sigma}_{ij}-\sigma_{ij}|\leq 4\min \{|\sigma_{ij}|,\gamma\sqrt{\frac{\log p}{n}}+\frac{4\sqrt{2\ln 1.25/\delta}\sqrt{\log p}}{n\epsilon}\}.
\end{equation}
\end{lemma}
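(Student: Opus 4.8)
The plan is to work on a high-probability event on which the perturbed entry $\tilde\sigma_{ij}$ is \emph{strictly} closer to $\sigma_{ij}$ than the threshold, and then split into two cases according to whether the thresholding step in \eqref{eq:10} zeroes the entry. Write $\tau:=\gamma\sqrt{\log p/n}+\frac{4\sqrt{2\ln(1.25/\delta)}\sqrt{\log p}}{n\epsilon}$ for that threshold, and recall $\tilde\sigma_{ij}=\sigma^*_{ij}+N_{ij}$ with $N_{ij}\sim\mathcal N(0,\sigma_1^2)$, $\sigma_1^2=2\ln(1.25/\delta)/(n^2\epsilon^2)$. Define the event $E$ on which $|\sigma^*_{ij}-\sigma_{ij}|\le\frac34\gamma\sqrt{\log p/n}$ and $|N_{ij}|\le\frac34\cdot\frac{4\sqrt{2\ln(1.25/\delta)}\sqrt{\log p}}{n\epsilon}$. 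Applying \eqref{eq:5} with $t=\frac34\gamma\sqrt{\log p/n}$ (legitimate since $t=o(1)$ under $n\ge\mathrm{Poly}(\log p)$, so the range restriction there is met) gives the first bound with probability $\ge 1-C_1p^{-9/2}$, because $\exp(-8nt^2/\gamma^2)=p^{-9/2}$ for that $t$; using the Gaussian tail bound $\mathbb P(|N_{ij}|\ge t)\le 2\exp(-t^2/(2\sigma_1^2))$ with the corresponding $t=\frac{3\sqrt{2\ln(1.25/\delta)}\sqrt{\log p}}{n\epsilon}$ gives the second with probability $\ge 1-2p^{-9/2}$, because then $t^2/(2\sigma_1^2)=\tfrac92\log p$. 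A union bound yields $\mathbb P(E)\ge 1-C_1'p^{-9/2}$ for a suitable constant $C_1'$, and on $E$ the triangle inequality gives $|\tilde\sigma_{ij}-\sigma_{ij}|\le\frac34\tau$.

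I would then carry out the case analysis on $E$. If $|\tilde\sigma_{ij}|\le\tau$, the algorithm sets $\hat\sigma_{ij}=0$, so $|\hat\sigma_{ij}-\sigma_{ij}|=|\sigma_{ij}|\le|\tilde\sigma_{ij}|+|\tilde\sigma_{ij}-\sigma_{ij}|\le\tau+\frac34\tau=\frac74\tau$; together with the trivial inequality $|\sigma_{ij}|\le 4|\sigma_{ij}|$ this gives $|\hat\sigma_{ij}-\sigma_{ij}|=|\sigma_{ij}|\le\min\{4\tau,4|\sigma_{ij}|\}=4\min\{|\sigma_{ij}|,\tau\}$. If instead $|\tilde\sigma_{ij}|>\tau$, the algorithm keeps $\hat\sigma_{ij}=\tilde\sigma_{ij}$, so $|\hat\sigma_{ij}-\sigma_{ij}|=|\tilde\sigma_{ij}-\sigma_{ij}|\le\frac34\tau\le 4\tau$; moreover $|\sigma_{ij}|\ge|\tilde\sigma_{ij}|-|\tilde\sigma_{ij}-\sigma_{ij}|>\tau-\frac34\tau=\frac14\tau$, hence $4|\sigma_{ij}|>\tau\ge|\hat\sigma_{ij}-\sigma_{ij}|$, and again $|\hat\sigma_{ij}-\sigma_{ij}|\le 4\min\{|\sigma_{ij}|,\tau\}$. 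This establishes \eqref{eq:11a}.

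The one genuinely delicate point — and the step that must be arranged with care — is the choice of the shrink factor $\frac34$, equivalently of the failure probability $p^{-9/2}$. The naive bound $|\tilde\sigma_{ij}-\sigma_{ij}|\le\tau$ (which would hold with probability $\ge 1-O(p^{-8})$) is insufficient: when $|\tilde\sigma_{ij}|$ only barely exceeds $\tau$ one can have $|\hat\sigma_{ij}-\sigma_{ij}|$ of order $\tau$ while $|\sigma_{ij}|$ is arbitrarily small, which would violate the $4|\sigma_{ij}|$ half of \eqref{eq:11a}. Shrinking \emph{both} tail thresholds by the same constant factor $<1$ — which \eqref{eq:4} and \eqref{eq:5} conveniently permit with the matching exponent $\tfrac92\log p$ — is exactly what makes $|\tilde\sigma_{ij}|>\tau$ force $|\sigma_{ij}|$ to exceed a fixed multiple of $\tau$, closing the case split; everything else is routine triangle-inequality bookkeeping.
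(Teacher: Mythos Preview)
Your argument is correct. You and the paper use the same probabilistic ingredients --- the sub-Gaussian bound \eqref{eq:5} at $t=\tfrac34\gamma\sqrt{\log p/n}$ and the Gaussian tail at $t=3\sigma_1\sqrt{\log p}$, each yielding the exponent $\tfrac92\log p$ --- but you organise the deterministic part differently. The paper fixes the unknown parameter and splits into three ranges of $|\sigma_{ij}|$ (small, large, intermediate), arguing in each range which of $A_{ij}$ or $A_{ij}^c$ is overwhelmingly likely and then bounding $|\hat\sigma_{ij}-\sigma_{ij}|$ accordingly; the intermediate range requires a further sub-split. You instead condition once on the single event $E=\{|\tilde\sigma_{ij}-\sigma_{ij}|\le\tfrac34\tau\}$ and then split on what the algorithm actually does, which collapses the analysis to two clean cases and makes transparent why the strict-inequality margin (the shrink factor $\tfrac34$) is essential for the $4|\sigma_{ij}|$ half of \eqref{eq:11a}. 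Your route is shorter and avoids the somewhat delicate Case~3 bookkeeping; the paper's case split, on the other hand, makes more explicit which regime of $|\sigma_{ij}|$ drives each outcome.
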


\begin{proof}[Proof of Lemma \ref{lemma:3}]
Let
$\Sigma^*=(\sigma^*_{ij})_{1\leq i,j\leq p}$ and $N=(n_{ij})_{1\leq i,j\leq p}$. Define the event $A_{ij}=\{|\tilde{\sigma}_{ij}|>\gamma\sqrt{\frac{\log p}{n}}+\frac{4\sqrt{2\ln 1.25/\delta}\sqrt{\log p}}{n\epsilon}\}$. We have:
\begin{equation}\label{aeq:1}
    |\hat{\sigma}_{ij}-\sigma_{ij}|=|\sigma_{ij}|\cdot I(A^c_{ij})+ |\tilde{\sigma}_{ij}-\sigma_{ij}|\cdot I(A_{ij}).
\end{equation}
By the triangle inequality, it is easy to see that 
\begin{align*}
    A_{ij}&=\big\{|\tilde{\sigma}_{ij}-\sigma_{ij}+\sigma_{ij}|>\gamma\sqrt{\frac{\log p}{n}}+\frac{4\sqrt{2\ln 1.25/\delta}\sqrt{\log p}}{n\epsilon}\big\}\\
    & \subset \big\{|\tilde{\sigma}_{ij}-\sigma_{ij}|> \gamma\sqrt{\frac{\log p}{n}}+\frac{4\sqrt{2\ln 1.25/\delta}\sqrt{\log p}}{n\epsilon}-|\sigma_{ij}|\big\}
\end{align*}
and 
\begin{align*}
    A^c_{ij}&=\big \{|\tilde{\sigma}_{ij}-\sigma_{ij}+\sigma_{ij}|\leq\gamma\sqrt{\frac{\log p}{n}}+\frac{4\sqrt{2\ln 1.25/\delta}\sqrt{\log p}}{n\epsilon}\big\}\\
    &\subset \big\{|\tilde{\sigma}_{ij}-\sigma_{ij}|>|\sigma_{ij}|-( \gamma\sqrt{\frac{\log p}{n}}+\frac{4\sqrt{2\ln 1.25/\delta}\sqrt{\log p}}{n\epsilon})\big\}.
\end{align*}
Depending on the value of $\sigma_{ij}$, we have the following three cases.
\paragraph{\bf{Case 1}} $|\sigma_{ij}|\leq \frac{\gamma}{4}\sqrt{\frac{\log p}{n}}+\frac{\sqrt{2\log 1.25/\delta}\sqrt{\log p}}{n\epsilon}$. For this case, we have
\begin{multline}
    \mathbb{P}(A_{ij})\leq \mathbb{P}(|\tilde{\sigma}_{ij}-\sigma_{ij}|> \frac{3\gamma}{4}\sqrt{\frac{\log p}{n}}+\frac{3\sqrt{2\ln 1.25/\delta}\sqrt{\log p}}{n\epsilon}) \leq C_1p^{-\frac{9}{2}}+2p^{-\frac{9}{2}}.
\end{multline}
This is due to the followings:
\begin{align}
    &\mathbb{P}\big(|\tilde{\sigma}_{ij}-\sigma_{ij}|> \frac{3\gamma}{4}\sqrt{\frac{\log p}{n}}+\frac{3\sqrt{2\ln 1.25/\delta}\sqrt{\log p}}{n\epsilon}\big)\\
    &\leq  \mathbb{P}\big(|\sigma^*_{ij}-\sigma_{ij}|> \frac{3\gamma}{4}\sqrt{\frac{\log p}{n}}+\frac{3\sqrt{2\ln 1.25/\delta}\sqrt{\log p}}{n\epsilon})-|n_{ij}|\big) \label{aeq:2} \\
    &=\mathbb{P}\big(B_{ij}\bigcap \big\{\frac{3\sqrt{2\ln 1.25/\delta}\sqrt{\log p}}{n\epsilon})-|n_{ij}|>0\big\}\big)\\
    &+\mathbb{P}\big(B_{ij}\bigcap \big\{\frac{3\sqrt{2\ln 1.25/\delta}\sqrt{\log p}}{n\epsilon})-|n_{ij}|\leq 0\big\}\big) \label{aeq:3}\\
    &\leq \mathbb{P}(|\sigma^*_{ij}-\sigma_{ij}|> \frac{3\gamma}{4}\sqrt{\frac{\log p}{n}})+\mathbb{P}(\frac{2\sqrt{3\ln 1.25/\delta}\log p}{n\epsilon})\leq |n_{ij}|) \label{aeq:4}\\
    & \leq C_1P^{-\frac{9}{2}}+2p^{-\frac{9}{2}},
\end{align}
where  event $B_{ij}$ denotes $B_{ij}=\{|\sigma^*_{ij}-\sigma_{ij}|> \frac{3\gamma}{4}\sqrt{\frac{\log p}{n}}+\frac{2\sqrt{2\ln 1.25/\delta}\log p}{n\epsilon})-|n_{ij}|\}$,  and the last inequality is due to (\ref{eq:4}) and (\ref{eq:5}).

Thus by (\ref{aeq:1}), with probability at least $1-C_1p^{-\frac{9}{2}}-2p^{-\frac{9}{2}}$, we have
\begin{equation*}
    |\hat{\sigma}_{ij}-\sigma_{ij}|=|\sigma_{ij}|,
\end{equation*}
which satisfies (\ref{eq:11a}).

\paragraph{\bf{Case 2}}  
$|\sigma_{ij}|\geq 2\gamma\sqrt{\frac{\log p}{n}}+\frac{8\sqrt{2\ln 1.25/\delta}\sqrt{\log p}}{n\epsilon})$. For this case, we have
\begin{align*}
    \mathbb{P}(A_{ij}^c)\leq \mathbb{P}(|\tilde{\sigma}_{ij}-\sigma_{ij}|\geq \gamma\sqrt{\frac{\log p}{n}}+\frac{4\sqrt{2\ln 1.25/\delta}\sqrt{\log p}}{n\epsilon})\leq C_1p^{-8}+2p^{-8},
\end{align*}
where the proof is the same as (13-17). Thus, with probability at least $1-C_1p^{-\frac{9}{2}}-2p^{-8}$, we have
\begin{equation}
      |\hat{\sigma}_{ij}-\sigma_{ij}|=|\tilde{\sigma}_{ij}-\sigma_{ij}|.
\end{equation}
Also, by (\ref{eq:9}), (\ref{eq:11a}) also holds.

\paragraph{\bf{Case 3}} Otherwise, 
\begin{align*}
    \frac{\gamma}{4}\sqrt{\frac{\log p}{n}}+\frac{\sqrt{2\log 1.25/\delta}\sqrt{\log p}}{n\epsilon}\leq |\sigma_{ij}| \leq  2\gamma\sqrt{\frac{\log p}{n}}+\frac{8\sqrt{2\ln 1.25/\delta}\sqrt{\log p}}{n\epsilon}).
\end{align*}
For this case, we have 
\begin{equation}
      |\hat{\sigma}_{ij}-\sigma_{ij}|=|\sigma_{ij}| \text{ or }|\tilde{\sigma}_{ij}-\sigma_{ij}| .
\end{equation}
When $|\sigma_{ij}|\leq \gamma\sqrt{\frac{\log p}{n}}+\frac{4\sqrt{2\ln 1.25/\delta}\sqrt{\log p}}{n\epsilon}$,  we can see from (\ref{eq:9}) that with probability at least $1-2p^{-6}-C_1p^{-8}$, $$|\tilde{\sigma}_{ij}-\sigma_{ij}|\leq \gamma\sqrt{\frac{\log p}{n}}+\frac{4\sqrt{2\ln 1.25/\delta}\sqrt{\log p}}{n\epsilon}\leq 4|\sigma_{ij}|.$$
Thus, $(\ref{eq:11a})$ also holds.

Otherwise when  $|\sigma_{ij}|\leq \gamma\sqrt{\frac{\log p}{n}}+\frac{4\sqrt{2\ln 1.25/\delta}\sqrt{\log p}}{n\epsilon}$, $(\ref{eq:11a})$ also holds. Thus, Lemma 3 is true.
\end{proof}

By Lemma \ref{lemma:3}, we have the following upper bound on the $\ell_2$-norm error of $\Sigma^+$.

\begin{theorem}\label{thm:2}
The output $\Sigma^+$ of Algorithm \ref{alg:1} satisfies:
\begin{align}
   \mathbb{E}\|\hat{\Sigma}-\Sigma\|^2_2=O(\frac{s^2\log p\log \frac{1}{\delta}}{n\epsilon^2}) \label{eq:19},
\end{align}
where the expectation is taken over the coins of the Algorithm and the randomness of $\{x_1, x_2, \cdots, x_n\}$.
\end{theorem}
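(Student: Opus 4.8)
The plan is to control $\mathbb{E}\|\hat{\Sigma}-\Sigma\|_2^2$ directly; the final eigenvalue‑truncation step is then harmless, since $\Sigma\succeq 0$ forces $\lambda_{\min}(\hat{\Sigma})\ge -\|\hat{\Sigma}-\Sigma\|_2$ by Weyl's inequality, so $\|\Sigma^+-\hat{\Sigma}\|_2=\max\{0,-\lambda_{\min}(\hat{\Sigma})\}\le\|\hat{\Sigma}-\Sigma\|_2$ and hence $\|\Sigma^+-\Sigma\|_2\le 2\|\hat{\Sigma}-\Sigma\|_2$. Write $\theta=\gamma\sqrt{\log p/n}+4\sqrt{2\ln(1.25/\delta)}\sqrt{\log p}/(n\epsilon)$ for the threshold used in Algorithm~\ref{alg:1}; since $0<\epsilon,\delta\le 1$ and $n\ge 1$ one checks $\theta^2=O(\log p\log(1/\delta)/(n\epsilon^2))$. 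The two ingredients I will use are: (i) the elementary inequality $\|A\|_2\le\max_i\sum_j|a_{ij}|$, valid for any symmetric matrix $A$, applied to $A=\hat{\Sigma}-\Sigma$; and (ii) the entrywise control of Lemma~\ref{lemma:3}.

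For the dominant term, let $\mathcal{E}$ be the event on which the conclusion of Lemma~\ref{lemma:3} holds simultaneously for all $1\le i,j\le p$; a union bound over the $p^2$ pairs gives $\mathbb{P}(\mathcal{E}^c)\le C_1 p^{2}p^{-9/2}=C_1 p^{-5/2}$. On $\mathcal{E}$, fix a row $i$ and split $\sum_j|\hat{\sigma}_{ij}-\sigma_{ij}|$ into the diagonal entry, the off‑diagonal entries with $\sigma_{ij}\ne 0$, and those with $\sigma_{ij}=0$. By Lemma~\ref{lemma:3} the first is at most $4\theta$, each of the second is at most $4\theta$ and there are at most $s$ of them by the definition of $\mathcal{G}_0(s)$, and each of the third equals $4\min\{0,\theta\}=0$. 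Hence $\sum_j|\hat{\sigma}_{ij}-\sigma_{ij}|\le 4(s+1)\theta$ for every $i$, so by (i) $\|\hat{\Sigma}-\Sigma\|_2\le 4(s+1)\theta$ on $\mathcal{E}$, and therefore $\mathbb{E}\big[\|\hat{\Sigma}-\Sigma\|_2^2\,\mathbf{1}_{\mathcal{E}}\big]\le 16(s+1)^2\theta^2=O\big(s^2\log p\log(1/\delta)/(n\epsilon^2)\big)$.

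It remains to check that $\mathbb{E}\big[\|\hat{\Sigma}-\Sigma\|_2^2\,\mathbf{1}_{\mathcal{E}^c}\big]$ is of lower order. Here I use a crude a‑priori bound: thresholding only removes entries of absolute value at most $\theta$, so $\|\hat{\Sigma}-\tilde{\Sigma}\|_F\le p\theta$; combined with $\|\Sigma^*\|_2\le\frac1n\sum_i\|x_i\|_2^2\le 1$ and $\|\Sigma\|_2\le\mathrm{tr}(\Sigma)=\mathbb{E}\|x_1\|_2^2\le 1$, the triangle inequality yields $\|\hat{\Sigma}-\Sigma\|_2\le\|\hat{\Sigma}-\tilde{\Sigma}\|_F+\|\Sigma^*-\Sigma\|_2+\|N\|_2\le p\theta+2+\|N\|_2$, hence $\|\hat{\Sigma}-\Sigma\|_2^2\le 3(p^2\theta^2+4+\|N\|_2^2)$. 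By Cauchy–Schwarz, $\mathbb{E}\big[\|\hat{\Sigma}-\Sigma\|_2^2\,\mathbf{1}_{\mathcal{E}^c}\big]\le 3(p^2\theta^2+4)\mathbb{P}(\mathcal{E}^c)+3\sqrt{\mathbb{E}\|N\|_2^4}\,\sqrt{\mathbb{P}(\mathcal{E}^c)}$, and using $\mathbb{P}(\mathcal{E}^c)\le C_1 p^{-5/2}$ together with the standard estimate $\mathbb{E}\|N\|_2^4=O(p^2\sigma_1^4)$ for the symmetric Gaussian matrix $N$ (with $\sigma_1^2=2\ln(1.25/\delta)/(n^2\epsilon^2)$), this is $O(p^{-1/2}\theta^2)+O(p^{-1/4}\sigma_1^2)+O(p^{-5/2})$, all dominated by $O\big(s^2\log p\log(1/\delta)/(n\epsilon^2)\big)$. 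Adding the two regimes, and multiplying by $4$ to pass from $\hat{\Sigma}$ to $\Sigma^+$, gives the claim.

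Most of the above is bookkeeping once the decomposition is in place; the step that needs genuine care is the second one, i.e.\ ruling out that a rare atypically large Gaussian perturbation (or empirical fluctuation) inflates the expectation. This is why one needs a deterministic‑type a‑priori bound on $\|\hat{\Sigma}-\Sigma\|_2$ and a fourth‑moment control on $\|N\|_2$, rather than merely the in‑probability statement of Lemma~\ref{lemma:3}; alternatively, enlarging the numerical constants in the threshold of Algorithm~\ref{alg:1} pushes the failure probability below any prescribed polynomial in $p$ and makes this step immediate.
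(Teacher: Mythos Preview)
Your proof is correct and follows the same overall architecture as the paper: reduce $\Sigma^+$ to $\hat{\Sigma}$ via Weyl's inequality, pass from the spectral norm to the $\ell_1$ row-sum norm using symmetry, invoke Lemma~\ref{lemma:3} on the ``good'' event together with the row-sparsity of $\Sigma$ to get the $O(s^2\theta^2)$ main term, and then argue that the contribution from the complementary ``bad'' event is of lower order.

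The only substantive difference is in how the bad event is handled. The paper works entrywise: it keeps a separate event $E_{ij}$ for each pair, forms the ``residual'' matrix $D=(\hat{\sigma}_{ij}-\sigma_{ij})I(E_{ij}^c)$, and bounds $\mathbb{E}\|D\|_1^2$ via H\"older's inequality and moment bounds on $\sigma_{ij}^*-\sigma_{ij}$ and $n_{ij}$ individually, together with a further case split according to whether $\hat{\sigma}_{ij}=\tilde{\sigma}_{ij}$ or $\hat{\sigma}_{ij}=0$. You instead take a single global event $\mathcal{E}=\bigcap_{ij}E_{ij}$, use the crude deterministic envelope $\|\hat{\Sigma}-\Sigma\|_2\le p\theta+2+\|N\|_2$, and close with Cauchy--Schwarz and the standard fourth-moment bound $\mathbb{E}\|N\|_2^4=O(p^2\sigma_1^4)$ for the GOE matrix $N$. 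Your route is shorter and avoids the somewhat delicate case analysis in the paper; the paper's route, by contrast, never needs a matrix-level moment bound on $N$ and would more readily extend to noise distributions for which only entrywise moments are available.
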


\begin{proof}[Proof of Theorem \ref{thm:2}]
We first show that $\|\Sigma^+-\Sigma\|_2\leq 2\|\hat{\Sigma}-\Sigma\|_2$. This is due to the following 
\begin{align}
   & \|\Sigma^+-\Sigma\|_2\leq \|\Sigma^+-\hat{\Sigma}\|_2+\|\hat{\Sigma}-\Sigma\|_2\leq \max_{i: \lambda_i\leq 0}|\lambda_i|+\|\hat{\Sigma}-\Sigma\|_2 \nonumber\\
   &\leq \max_{i: \lambda_i\leq 0}|\lambda_i-\lambda_i(\Sigma)|+\|\hat{\Sigma}-\Sigma\|_2\leq 2\|\hat{\Sigma}-\Sigma\|_2, \nonumber
\end{align}
where the third inequality is due to the fact that $\Sigma$ is positive semi-definite.

This means that we only need to bound $\|\hat{\Sigma}-\Sigma\|_2$. Since $\hat{\Sigma}-\Sigma$ is symmetric, we know that $\|\hat{\Sigma}-\Sigma\|_2\leq  \|\hat{\Sigma}-\Sigma\|_1$ \cite{golub2012matrix}. Thus, it suffices to prove that the bound in (\ref{eq:19}) holds for $\|\hat{\Sigma}-\Sigma\|_1$.

We define event $E_{ij}$ as 
\begin{equation}\label{aeq:5}
    E_{ij}=\{    |\hat{\sigma}_{ij}-\sigma_{ij}|\leq 4\min \{|\sigma_{ij}|,\gamma\sqrt{\frac{\log p}{n}}+\frac{4\sqrt{2\ln 1.25/\delta}\sqrt{\log p}}{n\epsilon}\} \}.
\end{equation}
Then, by Lemma \ref{lemma:3}, we have $\mathbb{P}(E_{ij})\geq 1-2C_1p^{-\frac{9}{2}}$.

Let $D=(d_{ij})_{1\leq i,j\leq p}$, where $d_{ij}=(\hat{\sigma}_{ij}-\sigma_{ij})\cdot I(E_{ij}^c)$. Then, we have 
\begin{align}
    &\|\hat{\Sigma}-\Sigma\|_1^2\leq  \|\hat{\Sigma}-\Sigma-D+D\|_1^2 \nonumber \\
    &\leq 2\|\hat{\Sigma}-\Sigma-D\|_1^2+2\|D\|_1^2 \nonumber\\
    &\leq 4(\sup_{j}\sum_{i\neq j}|\hat{\sigma}_{ij}-\sigma_{ij}|I(E_{ij}))^2+2\|D\|_1^2+O(\frac{\log p\log \frac{1}{\delta}}{n\epsilon^2}). \label{aeq:6}
\end{align}
We first bound the first term of (\ref{aeq:6}). By the definition of $E_{ij}$ and Lemma 3, we can upper bounded it by 
\begin{align}
&(\sup_{j}\sum_{i\neq j}|\hat{\sigma}_{ij}-\sigma_{ij}|I(E_{ij}))^2 \nonumber \\
&\leq 16(\sup_{j}\sum_{i\neq j}\min \{|\sigma_{ij}|,\gamma\sqrt{\frac{\log p}{n}}+\frac{4\sqrt{2\ln 1.25/\delta}\sqrt{\log p}}{n\epsilon}\})^2 \nonumber\\
&\leq 16 s^2 (\gamma\sqrt{\frac{\log p}{n}}+\frac{4\sqrt{2\ln 1.25/\delta}\sqrt{\log p}}{n\epsilon})^2 \nonumber\\
&\leq O(s^2\frac{\log p \log 1/\delta}{n\epsilon^2}),
\end{align}
where the second inequality is due to the assumption that at most $s$ elements of $(\sigma_{ij})_{i\neq j}$ are non-zero. 

For the second term in (\ref{aeq:6}), we have 
\begin{align}
    &\mathbb{E}\|D\|_1^2\leq p\mathbb \sum_{ij}d_{ij}^2 \nonumber=p\mathbb{E} \sum_{ij}[(\hat{\sigma}_{ij}-\sigma_{ij})^2I(E_{ij}^c\bigcap \{\hat{\sigma}_{ij}=\tilde{\sigma}_{ij}\})\nonumber\\
    &+(\hat{\sigma}_{ij}-\sigma_{ij})^2I(E_{ij}^c\bigcap \{\hat{\sigma}_{ij}=0\})]\nonumber\\
    &=p\mathbb{E}\sum_{ij}[(\tilde{\sigma}_{ij}-\sigma_{ij})^2I(E_{ij}^c)+p\sum_{ij}\mathbb{E}\sigma_{ij}^2I(E_{ij}^c\bigcap \{\hat{\sigma}_{ij}=0\})].\label{aeq:7}
\end{align}
For the first term in (\ref{aeq:7}),  we have 
\begin{align}
    &p\sum_{ij}\mathbb{E}\{(\tilde{\sigma}_{ij}-\sigma_{ij})^2I(E_{ij}^c)\}\leq p\sum_{ij}[\mathbb{E}(\tilde{\sigma}_{ij}-\sigma_{ij})^6]^{\frac{1}{3}}\mathbb{P}^{\frac{2}{3}}(E_{ij}^c) \label{aeq:8}\\
    &\leq Cp \cdot p^2 \frac{1}{n\epsilon^2 } p^{-3}=O(\frac{1}{n\epsilon^2}), \nonumber
\end{align}
where the first inequality is due to H\"{o}lder inequality and the second inequality is due to the fact that  $\mathbb{E}(\tilde{\sigma}_{ij}-\sigma_{ij})^8\leq C_3 [\mathbb{E}(\sigma^*_{ij}-\sigma_{ij})^8+\mathbb{E}n_{ij}^8]$. Since $n_{ij}$ is a Gaussian distribution, we have \cite{papoulis1965probability} $\mathbb{E}n_{ij}^8\leq C_4 \sigma_1^8=O(\frac{1}{n\epsilon})$. For the first term $\mathbb{E}(\sigma^*_{ij}-\sigma_{ij})^8$, since $x_i$ is sampled from a  sub-Gaussian distribution (\ref{eq:2}),  by Whittle Inequality (Theorem 2 in \cite{whittle1960bounds} or \cite{cai2012optimal}), the quadratic form $\sigma^*_{ij}$ satisfies $\mathbb{E}(\sigma^*_{ij}-\sigma_{ij})^8\leq C_5 \frac{1}{n}$ for some positive constant $C_5>0$. 

For the second term of (\ref{aeq:7}), we have 
\begin{align}
    &p\sum_{ij}\mathbb{E}\sigma_{ij}^2I(E_{ij}^c\bigcap \{\hat{\sigma}_{ij}=0\}) \nonumber \\
    &=p\sum_{ij}\mathbb{E}\sigma_{ij}^2 I(|\sigma_{ij}|>4\gamma\sqrt{\frac{\log p}{n}}+\frac{16\sqrt{2\ln 1.25/\delta}\sqrt{\log p}}{n\epsilon})\nonumber \\
    &\times I(|\tilde{\sigma}_{ij}|\leq \gamma\sqrt{\frac{\log p}{n}}+\frac{4\sqrt{2\ln 1.25/\delta}\sqrt{\log p}}{n\epsilon})\nonumber\\
    &\leq p\sum_{ij}\mathbb{E}\sigma_{ij}^2I(|\sigma_{ij}|>4\gamma\sqrt{\frac{\log p}{n}}+\frac{16\sqrt{2\ln 1.25/\delta}\sqrt{\log p}}{n\epsilon})\nonumber \\
    &\times I(|\sigma_{ij}|-|\tilde{\sigma}_{ij}-\sigma_{ij}|\leq \gamma\sqrt{\frac{\log p}{n}}+\frac{4\sqrt{2\ln 1.25/\delta}\sqrt{\log p}}{n\epsilon})\nonumber\\
    &\leq p\sum_{ij}\sigma_{ij}^2\mathbb{E}I(|\sigma_{ij}|>4\gamma\sqrt{\frac{\log p}{n}}+\frac{16\sqrt{2\ln 1.25/\delta}\sqrt{\log p}}{n\epsilon})I(|\tilde{\sigma}_{ij}-\sigma_{ij}|\geq  \frac{3}{4}|\sigma_{ij}|)\nonumber\\
    &\leq p\sum_{ij}\sigma_{ij}^2\mathbb{E}I(|\sigma_{ij}|>4\gamma\sqrt{\frac{\log p}{n}}+\frac{16\sqrt{2\ln 1.25/\delta}\sqrt{\log p}}{n\epsilon})I(|\sigma^*_{ij}-\sigma_{ij}|+|n_{ij}|\geq  \frac{3}{4}|\sigma_{ij}|)\nonumber\\
    &\leq p\sum_{ij}\sigma_{ij}^2\mathbb{P}\big(\big\{|\sigma^*_{ij}-\sigma_{ij}|\geq \frac{3}{4}|\sigma_{ij}|-|n_{ij}|\big\}\bigcap \big\{|\sigma_{ij}|>4\gamma\sqrt{\frac{\log p}{n}}+\frac{16\sqrt{2\ln 1.25/\delta}\sqrt{\log p}}{n\epsilon}\big\}\big)\nonumber\\
    &  =p\sum_{ij}\sigma_{ij}^2\mathbb{P}\big(\big\{|\sigma^*_{ij}-\sigma_{ij}|\geq \frac{3}{4}|\sigma_{ij}|-|n_{ij}|\big\}\bigcap \big\{|n_{ij}|\leq \frac{1}{4}|\sigma_{ij}|\big\}\bigcap \nonumber \\
    &\big\{|\sigma_{ij}|>4\gamma\sqrt{\frac{\log p}{n}}+\frac{16\sqrt{2\ln 1.25/\delta}\sqrt{\log p}}{n\epsilon}\big\}\big) +p\sum_{ij}\sigma_{ij}^2\mathbb{P}\big(\big\{|\sigma^*_{ij}-\sigma_{ij}|\geq \frac{3}{4}|\sigma_{ij}|-|n_{ij}|\big\}\nonumber \\
    &\bigcap \big\{|n_{ij}|\geq \frac{1}{4}|\sigma_{ij}|\big\} 
    \bigcap \big\{|\sigma_{ij}|>4\gamma\sqrt{\frac{\log p}{n}}+\frac{16\sqrt{2\ln 1.25/\delta}\sqrt{\log p}}{n\epsilon}\big\}\big) \label{aeq:9}\\
    &\leq p\sum_{ij}\sigma_{ij}^2\mathbb{P}\big(\big\{|\sigma^*_{ij}-\sigma_{ij}|\geq \frac{1}{2}|\sigma_{ij}|\big\}\bigcap \big\{|\sigma_{ij}|>4\gamma\sqrt{\frac{\log p}{n}}+\frac{16\sqrt{2\ln 1.25/\delta}\sqrt{\log p}}{n\epsilon}\big\}\big)\nonumber \\
    &+p\sum_{ij}\sigma^2_{ij}\mathbb{P}\big(\big\{|n_{ij}|\geq \frac{1}{4}|\sigma_{ij}|\big\}\bigcap \big\{|\sigma_{ij}|>4\gamma\sqrt{\frac{\log p}{n}}+\frac{16\sqrt{2\ln 1.25/\delta}\sqrt{\log p}}{n\epsilon}\big\}\big) \label{aeq:10}.
\end{align}
For the second term of (\ref{aeq:10}), by Lemmas 1 and 2  we have 
\begin{align}
& p\sum_{ij}\sigma^2_{ij}\mathbb{P}(\{|n_{ij}|\geq \frac{1}{4}|\sigma_{ij}|\}\bigcap \{|\sigma_{ij}|>4\gamma\sqrt{\frac{\log p}{n}}+\frac{16\sqrt{2\ln 1.25/\delta}\sqrt{\log p}}{n\epsilon}\}) \nonumber \\
&\leq p\sum_{ij}\sigma^2_{ij}\mathbb{P}(|n_{ij}|\geq \gamma\sqrt{\frac{\log p}{n}}+ \frac{4\sqrt{2\ln 1.25/\delta}\log p}{n\epsilon}\})\mathbb{P}(|n_{ij}|>\frac{1}{4}\sigma_{ij}) \nonumber \\
&\leq Cp\sum_{ij}\sigma_{ij}^2\cdot \exp(-\frac{(\gamma\sqrt{\frac{\log p}{n}}+4\sigma_1\sqrt{\log p})^2}{2\sigma_1^2} )\exp(-\frac{\sigma_{ij}^2}{32\sigma_1^2}) \nonumber \\
&\leq C\sigma_1^2 p\cdot p^2\exp(-\frac{\gamma^2 \log p}{2n\sigma_1^2})p^{-8}\\
&\leq C\sigma_1^2 p^{-5}\frac{2n\sigma_1^2}{\gamma^2\log p} = O(\frac{\log 1/\delta}{n\epsilon^2}).
\end{align}
For the first term of (\ref{aeq:10}), by Lemma 2 we have 
\begin{align}
    &p\sum_{ij}\sigma_{ij}^2\mathbb{P}(\{|\sigma^*_{ij}-\sigma_{ij}|\geq \frac{1}{2}|\sigma_{ij}|\}\bigcap \{|\sigma_{ij}|\geq 4\gamma\sqrt{\frac{\log p}{n}}\})\nonumber \\
    &\leq \frac{p}{n}\sum_{ij}n\sigma_{ij}^2\exp(-n\frac{2\sigma_{ij}^2}{\gamma^2})I(|\sigma_{ij}|\geq 4\gamma\sqrt{\frac{\log p}{n}})\nonumber \\
    &\leq \frac{p}{n}\sum_{ij}[n\sigma_{ij}^2\exp(-n\frac{\sigma_{ij}^2}{\gamma^2})] \exp(-n\frac{\sigma_{ij}^2}{\gamma^2}) I(|\sigma_{ij}|\geq 4\gamma\sqrt{\frac{\log p}{n}})\nonumber \\
    &\leq C\frac{p^3}{n}p^{-16}=O(\frac{1}{n}).
    \end{align}
Thus in total, we have $\mathbb{E}\|D\|_1^2=O(\frac{\log 1/\delta}{n\epsilon^2})$. This means that  $\mathbb{E}\|\hat{\Sigma}-\Sigma\|_1^2=O(\frac{s^2\log p\log 1/\delta}{n\epsilon^2})$, which completes the proof.
\end{proof}

\begin{corollary}\label{cor:1}
For any $1\leq w\leq \infty$, the matrix $\hat{\Sigma}$ in (\ref{eq:10}) after the first step of thresholding satisfies 
\begin{equation}\label{eq:20}
    \|\hat{\Sigma}-\Sigma\|^2_w\leq O(s^2\frac{\log p\log \frac{1}{\delta}}{n\epsilon^2}),
\end{equation}
where the $w$-norm of any matrix $A$ is defined as $\|A\|_w=\sup \frac{\|Ax\|_w}{\|x\|_w}$. Specifically, for a matrix $A=(a_{ij})_{1\leq i,j\leq p}$, 
$\|A\|_1=\sup_{j}\sum_{i}|a_{ij}|$ is the maximum absolute column sum, and 
$\|A\|_\infty=\sup_{i}\sum_{j}|a_{ij}|$
is the maximum absolute row sum.
\end{corollary}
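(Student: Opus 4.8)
The plan is to reduce the general $\ell_w$ estimate to the $\ell_1$ estimate already established inside the proof of Theorem~\ref{thm:2}, by exploiting the symmetry of $\hat{\Sigma}-\Sigma$. Concretely, I will show that for every symmetric $p\times p$ matrix $A$ and every $1\le w\le\infty$ one has $\|A\|_w\le\|A\|_1$, and then apply this with $A=\hat{\Sigma}-\Sigma$.

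First I would record the two endpoint facts quoted in the statement: $\|A\|_1=\sup_j\sum_i|a_{ij}|$ and $\|A\|_\infty=\sup_i\sum_j|a_{ij}|$. Transposing a matrix exchanges rows and columns, so $\|A\|_\infty=\|A^{T}\|_1$, and hence $\|A\|_1=\|A\|_\infty$ whenever $A=A^{T}$. Next I would prove the interpolation bound $\|A\|_w\le\|A\|_1^{1/w}\|A\|_\infty^{1-1/w}$ for every $1\le w\le\infty$. For $1<w<\infty$ this is a Schur-test computation: for any vector $x$ and any row index $i$, splitting $|a_{ij}|=|a_{ij}|^{1-1/w}\cdot|a_{ij}|^{1/w}$ and applying H\"older's inequality with exponents $w/(w-1)$ and $w$ gives $\big|\sum_j a_{ij}x_j\big|^{w}\le\big(\sum_j|a_{ij}|\big)^{w-1}\sum_j|a_{ij}|\,|x_j|^{w}\le\|A\|_\infty^{\,w-1}\sum_j|a_{ij}|\,|x_j|^{w}$; summing over $i$ and using $\sum_i|a_{ij}|\le\|A\|_1$ yields $\|Ax\|_w^{w}\le\|A\|_\infty^{\,w-1}\|A\|_1\,\|x\|_w^{w}$, which is the claim. (Alternatively one may simply invoke the Riesz--Thorin interpolation theorem on the sequence spaces $\ell_1$ and $\ell_\infty$.) The cases $w=1$ and $w=\infty$ are immediate. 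Combining the interpolation bound with the symmetry identity $\|A\|_1=\|A\|_\infty$ gives $\|A\|_w\le\|A\|_1$ for all $1\le w\le\infty$.

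Finally, taking $A=\hat{\Sigma}-\Sigma$, which is symmetric, gives $\|\hat{\Sigma}-\Sigma\|_w\le\|\hat{\Sigma}-\Sigma\|_1$ for every $1\le w\le\infty$, and the estimate $\mathbb{E}\|\hat{\Sigma}-\Sigma\|_1^2=O\!\big(s^2\log p\log(1/\delta)/(n\epsilon^2)\big)$ proved in Theorem~\ref{thm:2} then yields (\ref{eq:20}) at once. I do not anticipate any genuine obstacle: this is essentially a bookkeeping corollary, and the only point requiring a moment's care is the Schur-test/H\"older step establishing the interpolation inequality, together with the observation that the $w=2$ instance is exactly the inequality $\|\hat{\Sigma}-\Sigma\|_2\le\|\hat{\Sigma}-\Sigma\|_1$ already used in the proof of Theorem~\ref{thm:2}.
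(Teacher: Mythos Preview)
Your proposal is correct and follows essentially the same route as the paper: reduce $\|\hat{\Sigma}-\Sigma\|_w$ to $\|\hat{\Sigma}-\Sigma\|_1$ via an interpolation inequality together with the symmetry identity $\|A\|_1=\|A\|_\infty$, and then quote the $\ell_1$ bound established in the proof of Theorem~\ref{thm:2}. The only cosmetic difference is that the paper simply cites the Riesz--Thorin interpolation theorem to obtain $\|A\|_w\le\max\{\|A\|_1,\|A\|_2,\|A\|_\infty\}$, whereas you supply a self-contained Schur-test/H\"older derivation of the sharper bound $\|A\|_w\le\|A\|_1^{1/w}\|A\|_\infty^{1-1/w}$ (and also note that Riesz--Thorin would do).
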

Comparing the bound in the above corollary with the optimal minimax rate  $\Theta(\frac{s^2\log p}{n})$ in \cite{cai2012optimal} for the non-private case, we can see that   
the impact of the  differential privacy is to make the number of efficient sample from $n$ to $n\epsilon^2$. It is an open problem to determine whether  the bound in Theorem \ref{thm:2} is tight.

\begin{proof}[Proof of Corollary \ref{cor:1}]
By Riesz-Thorin interpolation theorem \cite{dunford1958linear}, we have $$\|A\|_w\leq \max \{ \|A\|_1, \|A\|_2, \|A\|_\infty\}$$ for any matrix $A$ and any $1\leq w\leq \infty.$ Since $\Sigma^+-\Sigma$ is a symmetric matrix, we have $\|\Sigma^+-\Sigma\|_2\leq \|\Sigma^+-\Sigma\|_1$ and $\|\Sigma^+-\Sigma\|_1=\|\Sigma^+-\Sigma\|_\infty$. Thus, by the proof of Theorem \ref{thm:2} we get this corollary.
\end{proof}

\subsection{Extension to Local Differential Privacy}

One  advantage of our Algorithm \ref{alg:1} is that it can be easily  extended to the locally differentially private (LDP) model. 

\paragraph{Differential privacy in the local model.} In LDP, we have a data universe $\mathcal{D}$,  $n$ players with each holding  a private data record $x_i\in \mathcal{D}$, and a server that is in charge of coordinating the protocol. An LDP protocol proceeds in $T$ rounds. In each round, the server sends a message, which sometime is called a query, to a subset of the players, requesting them to run a particular algorithm. Based on the queries, each player $i$ in the subset selects an algorithm $Q_i$, run it on her data, and sends the output back to the server.

\begin{definition}\cite{dwangnips18}\label{def:1}
An algorithm $Q$ is $(\epsilon, \delta)$-locally differentially private (LDP) if for all pairs $x,x'\in \mathcal{D}$, and for all events $E$ in the output space of $Q$, we have $\text{Pr}[Q(x)\in E]\leq e^{\epsilon}\text{Pr}[Q(x')\in E]+\delta.$
A multi-player protocol is $\epsilon$-LDP if for all possible inputs and runs of the protocol, the transcript of player i's interaction with the server is $\epsilon$-LDP. If $T=1$, we say that the protocol is $(\epsilon, \delta)$ non-interactive LDP.
\end{definition}
	\begin{algorithm}[h]
		\caption{LDP-Thresholding}
		$\mathbf{Input}$: $\epsilon, \delta$ are privacy parameters, $\{x_1,x_2,\cdots, x_n\}\sim P\in \mathcal{P}(\sigma^2, s)$.
		\begin{algorithmic}[1]
				\FOR {Each $i\in [n]$}
	    \STATE Denote $\tilde{x}_i\tilde{x}_i^T=x_ix_i^T+z_i$, where $z_i \in \mathbb{R}^{p\times p}$ is a symmetric matrix with its upper triangle ( including the diagonal) being i.i.d samples from 
	    $\mathcal{N}(0, \sigma^2)$; here $\sigma^2=\frac{2\ln(1.25/\delta)}{\epsilon^2}$, and each lower triangle entry is copied from its upper triangle counterpart. 
	    	\ENDFOR
		   \STATE Compute $\tilde{\Sigma}=(\tilde{\sigma}_{ij})_{1\leq i,j\leq p}=\frac{1}{n}\sum_{i=1}^n\tilde{x}_i\tilde{x}_i^T,$
	    \STATE Define the thresholding estimator $\hat{\Sigma}=(\hat{\sigma}_{ij})_{1\leq i,j\leq n}$ as
		\begin{equation}\label{eq:11}
		    \hat{\sigma}_{ij}=\tilde{\sigma}_{ij}\cdot I[|\tilde{\sigma}_{ij}|>\gamma\sqrt{\frac{\log p}{n}}+\frac{4\sqrt{2\ln 1.25/\delta}\sqrt{\log p}}{\sqrt{n}\epsilon}].
		\end{equation}
		\STATE Let the eigen-decomposition of $\hat{\Sigma}$ as $\hat{\Sigma}=\sum_{i=1}^p\lambda_iv_iv_i^T$. Let $\lambda^+=\max\{\lambda_i, 0\}$ be the positive part of $\lambda_i$, then define $\Sigma^+=\sum_{i=1}^p\lambda^+ v_iv_i^T$.
		\RETURN $\Sigma^+$.
		\end{algorithmic}\label{alg:2}
	\end{algorithm}

Inspired by Algorithm \ref{alg:1}, it is easy to extend our DP algorithm to the LDP model. The idea is that each $X_i$ perturbs its covariance and aggregates the noisy version of covariance, see Algorithm \ref{alg:2} for detail. 

The following theorem shows that the error bound of the output in Algorithm \ref{alg:2} is the same as the the bound in Theorem \ref{thm:2} asymptotically, whose proof is almost the same as in Theorem \ref{thm:2}.

\begin{theorem}\label{thm:3}
The output $\Sigma^+$ of Algorithm \ref{alg:2} satisfies:
\begin{align}
   \mathbb{E}\|\hat{\Sigma}-\Sigma\|^2_2=O(\frac{s^2\log p\log \frac{1}{\delta}}{n\epsilon^2}),
\end{align}
where the expectation is taken over the coins of the Algorithm and the randomness of $\{x_1, x_2, \cdots, x_n\}$. Moreover,   $\hat{\Sigma}$ in (\ref{eq:11}) satisfies $\|\hat{\Sigma}-\Sigma\|_{w}^2=O(\frac{s\log p\log \frac{1}{\delta}}{n\epsilon^2}) $.
\end{theorem}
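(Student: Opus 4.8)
The plan is to reduce Theorem~\ref{thm:3} to the analysis already carried out for Theorem~\ref{thm:2}. The key observation is that the server's aggregate in Algorithm~\ref{alg:2} has the same form as the perturbed matrix in Algorithm~\ref{alg:1}, only with a larger per-entry noise variance. Indeed, writing $\tilde{x}_i\tilde{x}_i^T=x_ix_i^T+z_i$ with $z_i$ symmetric and upper-triangular entries i.i.d.\ $\mathcal N(0,\sigma^2)$, $\sigma^2=\frac{2\ln(1.25/\delta)}{\epsilon^2}$, one has
\[
\tilde\Sigma\;=\;\frac1n\sum_{i=1}^n\tilde{x}_i\tilde{x}_i^T\;=\;\Sigma^*+\bar z,\qquad \bar z:=\frac1n\sum_{i=1}^n z_i,
\]
so $\bar z$ is again symmetric with upper-triangular entries i.i.d.\ $\mathcal N(0,(\sigma_1')^2)$, where $(\sigma_1')^2=\sigma^2/n=\frac{2\ln(1.25/\delta)}{n\epsilon^2}$. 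Thus Algorithm~\ref{alg:2} is exactly Algorithm~\ref{alg:1} run with $\sigma_1$ replaced by $\sigma_1'$, and the threshold in (\ref{eq:11}), namely $\gamma\sqrt{\log p/n}+\frac{4\sqrt{2\ln 1.25/\delta}\sqrt{\log p}}{\sqrt n\,\epsilon}=\gamma\sqrt{\log p/n}+4\sigma_1'\sqrt{\log p}$, is precisely the Algorithm~\ref{alg:1} threshold under the same substitution. (If one also wants the privacy statement for Algorithm~\ref{alg:2}, it follows as in Theorem~\ref{thm:1}: since $\|x_i\|_2\le 1$ the map $x\mapsto xx^T$ has bounded $\ell_2$-sensitivity, so by the Gaussian mechanism each player's message $\tilde{x}_i\tilde{x}_i^T$ is $(\epsilon,\delta)$-LDP, and the server's aggregation, thresholding, and eigenvalue-truncation steps are post-processing of the transcript.)

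Given this reduction, I would first establish the analogue of Lemma~\ref{lemma:3}: for every fixed $i,j$, with probability at least $1-C_1p^{-9/2}$,
\[
|\hat\sigma_{ij}-\sigma_{ij}|\le 4\min\Big\{|\sigma_{ij}|,\,\gamma\sqrt{\tfrac{\log p}{n}}+\tfrac{4\sqrt{2\ln 1.25/\delta}\sqrt{\log p}}{\sqrt n\,\epsilon}\Big\}.
\]
Its proof copies that of Lemma~\ref{lemma:3} line by line; the only input that changes is the Gaussian tail bound on the noise, and since each $\bar z_{ij}\sim\mathcal N(0,(\sigma_1')^2)$, the $n=1$ case of Lemma~1 gives $\mathbb P\big(|\bar z_{ij}|>4\sigma_1'\sqrt{\log p}\big)\le 2p^{-8}$, which together with (\ref{eq:6}) produces the analogue of (\ref{eq:9}) for the new threshold; the three-case split on the size of $|\sigma_{ij}|$ then carries over unchanged. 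Next I would repeat the proof of Theorem~\ref{thm:2} verbatim: reduce to $\|\Sigma^+-\Sigma\|_2\le 2\|\hat\Sigma-\Sigma\|_2\le 2\|\hat\Sigma-\Sigma\|_1$ (the first inequality using that $\Sigma$ is positive semidefinite, the second using symmetry), decompose $\hat\Sigma-\Sigma=(\hat\Sigma-\Sigma-D)+D$ with $D$ supported on the bad events, bound the good part by $16s^2\big(\gamma\sqrt{\log p/n}+4\sigma_1'\sqrt{\log p}\big)^2=O\!\big(\frac{s^2\log p\log(1/\delta)}{n\epsilon^2}\big)$ using the $s$-sparsity of each column, and control $\mathbb E\|D\|_1^2$ by the same H\"older/Whittle/Gaussian-moment argument, which still yields $\mathbb E\|D\|_1^2=O\!\big(\frac{\log(1/\delta)}{n\epsilon^2}\big)$ (the only change there is that the noise moments $\mathbb E\bar z_{ij}^{2k}$ now scale as $(\sigma_1')^{2k}$ instead of $\sigma_1^{2k}$, which enters only in lower-order terms), so the $D$ contribution is absorbed. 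This gives the $\ell_2$ bound, and the $\ell_w$ bound then follows exactly as in Corollary~\ref{cor:1}, from $\|A\|_w\le\max\{\|A\|_1,\|A\|_2,\|A\|_\infty\}$ together with $\|\hat\Sigma-\Sigma\|_1=\|\hat\Sigma-\Sigma\|_\infty$ by symmetry.

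The only point where the local model genuinely departs from Theorem~\ref{thm:2} --- and hence the step I would scrutinize most --- is the variance bookkeeping. Each player injects noise of variance $\Theta(1/\epsilon^2)$, far larger than anything appearing in Algorithm~\ref{alg:1}; the entire favorable rate rests on averaging $n$ independent copies collapsing this to variance $\Theta(1/(n\epsilon^2))$. I would therefore verify carefully that this averaged variance $(\sigma_1')^2$ is precisely the value to which both the threshold in (\ref{eq:11}) and every concentration and moment estimate imported from the proof of Theorem~\ref{thm:2} are calibrated, so that the single substitution $\sigma_1^2\mapsto(\sigma_1')^2$ really is the only change required and no stray $\log p$ or polynomial-in-$p$ factor slips in. Everything else is mechanical.
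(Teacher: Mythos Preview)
Your proposal is correct and matches the paper's approach exactly: the paper does not give a separate proof of Theorem~\ref{thm:3} but simply states that it ``is almost the same as in Theorem~\ref{thm:2},'' and your reduction via the substitution $\sigma_1\mapsto\sigma_1'=\sigma/\sqrt{n}$ (so that $\tilde\Sigma=\Sigma^*+\bar z$ with $\bar z_{ij}\sim\mathcal N(0,(\sigma_1')^2)$ and the threshold in (\ref{eq:11}) equals $\gamma\sqrt{\log p/n}+4\sigma_1'\sqrt{\log p}$) is precisely the mechanism that makes this work. Your careful check that the averaged variance is exactly what the threshold and all downstream estimates are calibrated to is the one nontrivial point, and you have it right.
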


\section{Experiments}

In this section, we evaluate the performance of  Algorithm \ref{alg:1} and \ref{alg:2} practically on  synthetic datasets.

\paragraph{Data Generation} 
We first generate a symmetric sparse matrix $\tilde{U}$ with the sparsity ratio $sr$, that is, there are $sr\times p \times p$ non-zero entries of the matrix.  Then, we let $U= \tilde{U}+\lambda I_{p}$ for some constant $\lambda$ to make $U$ positive semi-definite and then scale it to $U=\frac{U}{c}$ by some constant $c$ which makes the norm of samples less than 1 (with high probability)\footnote{Although the distribution is not bounded by 1, actually, as we see from previous section, we can obtain the same result as long as the $\ell_2$ norm of the samples is bounded by 1.}. Finally, we sample $\{x_1, \cdots, x_n\}$ from the multivariate Gaussian distribution $\mathcal{N}(0, U)$. In this paper, we will use set $\lambda=50$ and $c=200$.

\paragraph{Experimental Settings} To measure the performance, we compare the $\ell_1$ and $\ell_2$ norm of relative error, respectively. That is, $\frac{\|\Sigma^+-U\|_2}{\|U\|_2}$ or $\frac{\|\Sigma^+-U\|_1}{\|U\|_1}$ with the sample size $n$ in three different settings: 1) we set $p=100$, $\epsilon=1$, $\delta=\frac{1}{n}$ and change the sparse ratio $sr=\{0.1, 0.2, 0.3, 0.5\}$. 2) We set $\epsilon=1$, $\delta=\frac{1}{n}$, $sr=0.2$, and let the dimensionality $p$ vary in $\{50,100, 200, 500\}$. 3) We fix $p=200$, $\delta=\frac{1}{n}$, $sr=0.2$ and change the privacy level as $\epsilon=\{0.1,  0.5, 1, 2\}$.
 We run each experiment 20 times and take the average error as the final one. 
 
 \paragraph{Experimental Results} Figure \ref{fig:1} and \ref{fig:2} are the results of DP-Thresholding (Algorithm \ref{alg:1}) with $\ell_2$  and $\ell_1$ relative error, respectively. Figure \ref{fig:3} and \ref{fig:4} are the results of LDP-Thresholding (Algorithm \ref{alg:2}) with $\ell_2$  and $\ell_1$ relative error, respectively. From  the figures we can see that: 1) if the sparsity ratio is large {\em i.e.,} the underlying covairance matrix is more dense, the relative error will be larger, this is due to the  fact showed in Theorem \ref{thm:2} and \ref{thm:3}  that the error depends on the sparsity $s$. 2) The dimensionality only slightly affects the relative error. That is, even if we double the value of $p$, the error increases only slightly. 
 This is consistent with our theoretical analysis in Theorem \ref{thm:2} and \ref{thm:3} which says that the error of our private estimators is only logarithmically depending on $p$ ({\em i.e.,} $\log p$).
 3) With the privacy parameter $\epsilon$ increases (which means more private), the error will become larger. This has also been showed in previous theorems.
 
 In summary, all the experimental results support our theoretical analysis.

\begin{figure*}
    \centering
    \includegraphics[width=1\textwidth,height=0.20\textheight]{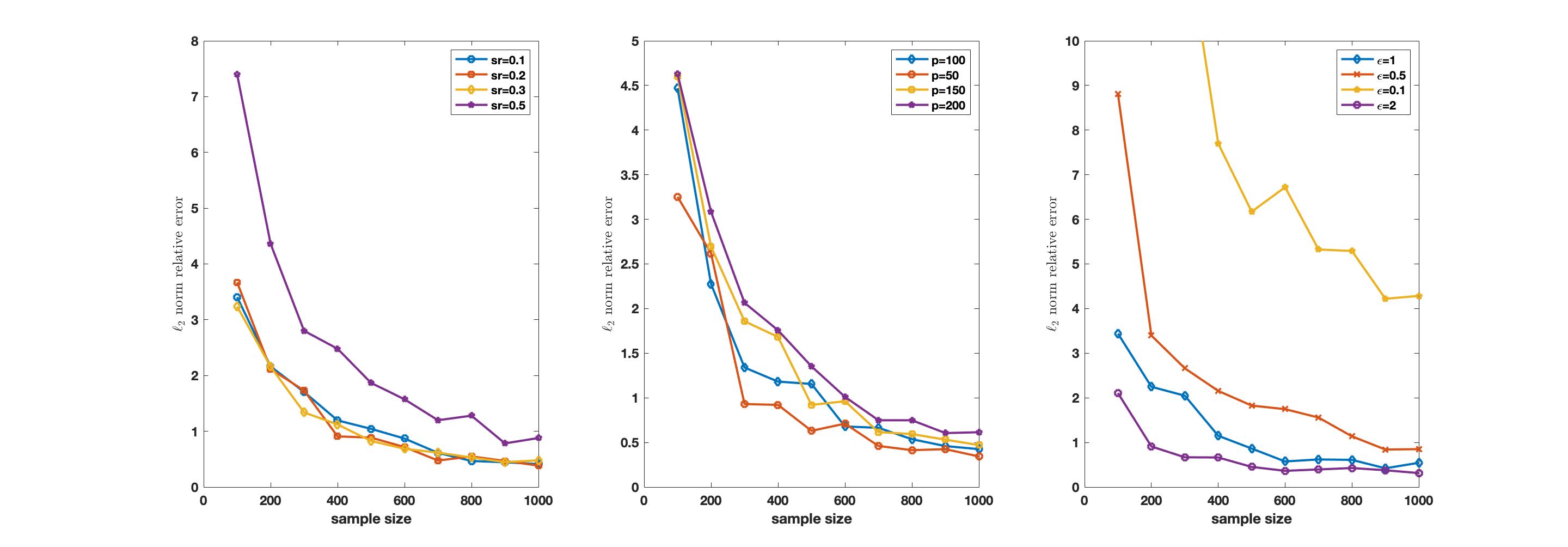}
    \caption{Experiment results of Algorithm \ref{alg:1} for $\ell_2$-norm relative error. The left one is for different sparsity levels, the middle one is for different dimensionality $p$, and the right one is for different privacy level $\epsilon$.}
    \label{fig:1}
\end{figure*}

\begin{figure*}
    \centering
    \includegraphics[width=1\textwidth,height=0.20\textheight]{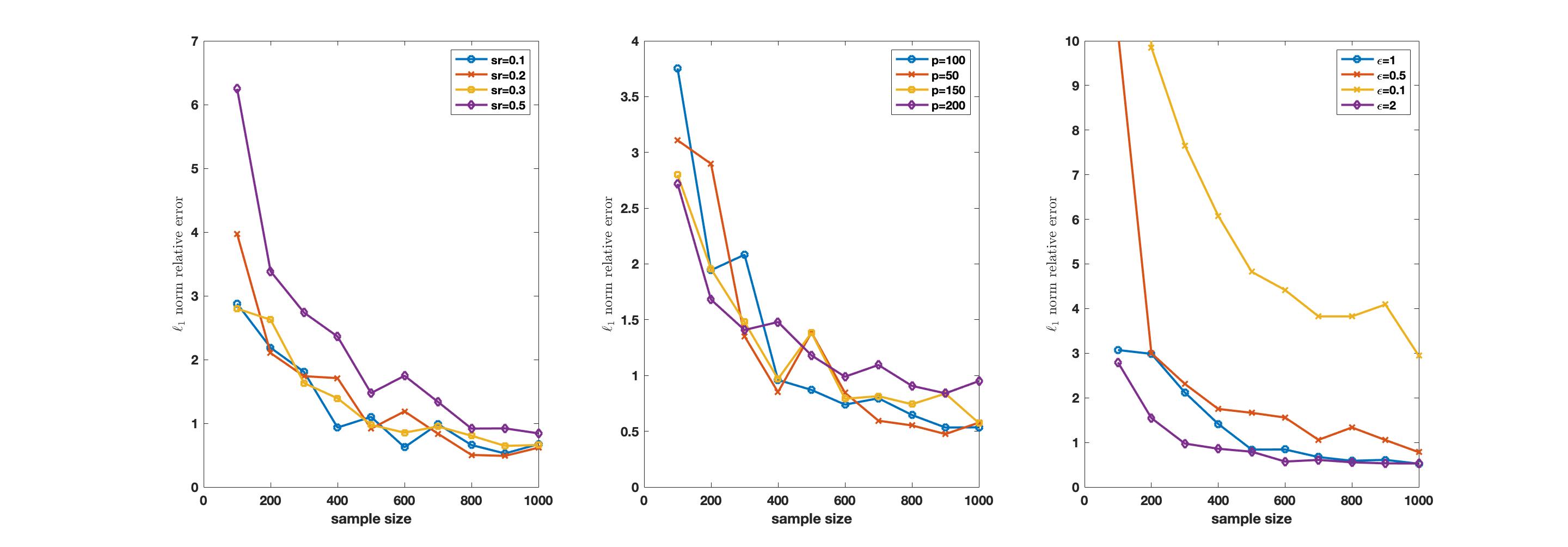}
    \caption{Experiment results of Algorithm \ref{alg:1} for $\ell_1$-norm relative error. The left one is for different sparsity levels, the middle one is for different dimensionality $p$, and the right one is for different privacy level $\epsilon$.}
    \label{fig:2}
\end{figure*}

\begin{figure*}
    \centering
    \includegraphics[width=1\textwidth,height=0.20\textheight]{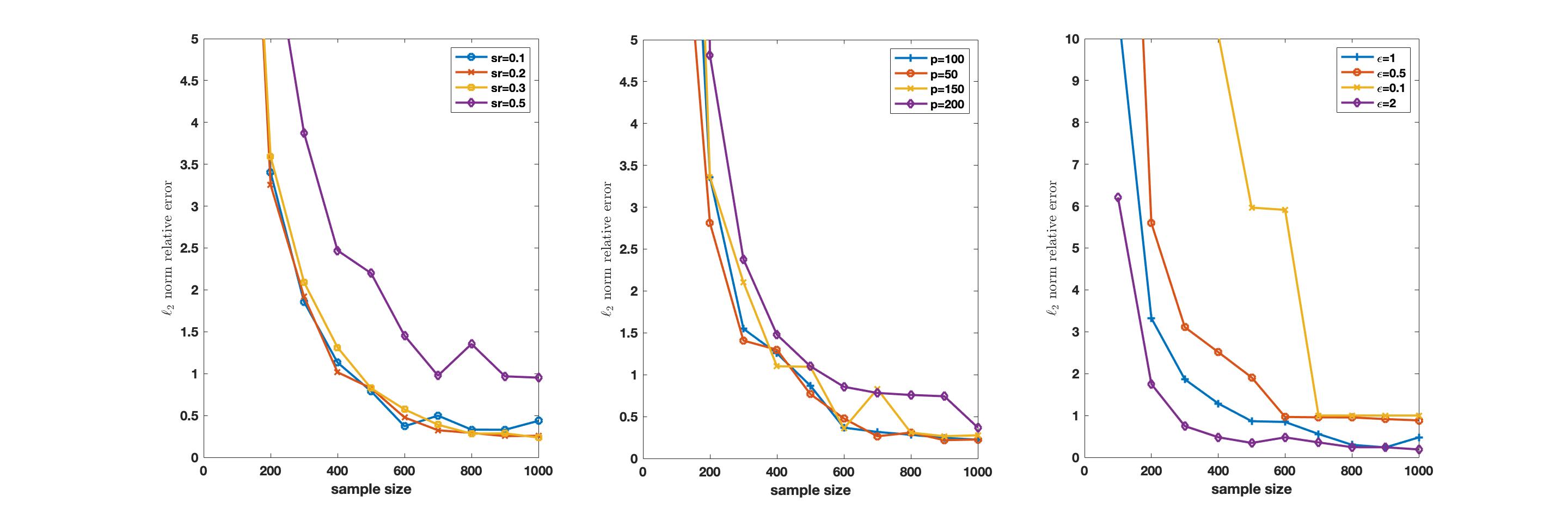}
    \caption{Experiment results of Algorithm \ref{alg:2} for $\ell_2$-norm relative error. The left one is for different sparsity levels,  the middle one is for different dimensionality $p$, and the right one is for different privacy level $\epsilon$. }
    \label{fig:3}
\end{figure*}
\begin{figure*}
    \centering
    \includegraphics[width=1\textwidth,height=0.20\textheight]{result_local_l2.jpg}
    \caption{Experiment results of Algorithm \ref{alg:2} for $\ell_1$-norm relative error. The left one is for different sparsity levels, the middle one is for different dimensionality $p$, and the right one is for different privacy level $\epsilon$.  }
    \label{fig:4}
\end{figure*}

\section{Conclusion and Discussion}

In the paper, we study the problem of estimating the sparse covariance matrix of a bounded sub-Gaussian distribution under differential privacy model and propose a method called DP-Threshold, which achieves a non-trivial error bound and can be easily extended to the local model. Experiments on synthetic datasets yield consistent results with  the theoretical analysis.

There are still some open problems for this problem. Firstly, although the thresholding method can achieve non-trivial error bound for our private estimator,
in practice it is hart to find the best threshold. Thus, an open problem is how to get the best threshold.
Secondly, as mentioned in the related work section, there are many recent results on  private Gaussian estimation, which may make the $\ell_2$ norm of the samples greater than 1. Thus, it is an interesting problem to extend our method to a general Gaussian distribution.

\section*{References}

\bibliography{tcs_2019}

\begin{thebibliography}{10}
\expandafter\ifx\csname url\endcsname\relax
  \def\url#1{\texttt{#1}}\fi
\expandafter\ifx\csname urlprefix\endcsname\relax\def\urlprefix{URL }\fi
\expandafter\ifx\csname href\endcsname\relax
  \def\href#1#2{#2} \def\path#1{#1}\fi

\bibitem{dwork2006calibrating}
C.~Dwork, F.~McSherry, K.~Nissim, A.~Smith, Calibrating noise to sensitivity in
  private data analysis, in: Theory of Cryptography Conference, Springer, 2006,
  pp. 265--284.

\bibitem{uber}
J.~Near, Differential privacy at scale: Uber and berkeley collaboration, in:
  Enigma 2018 (Enigma 2018), {USENIX} Association, Santa Clara, CA, 2018.

\bibitem{google}
{\'U}.~Erlingsson, V.~Pihur, A.~Korolova, Rappor: Randomized aggregatable
  privacy-preserving ordinal response, in: Proceedings of the 2014 ACM SIGSAC
  conference on computer and communications security, ACM, 2014, pp.
  1054--1067.

\bibitem{apple}
J.~Tang, A.~Korolova, X.~Bai, X.~Wang, X.~Wang, Privacy loss in apple's
  implementation of differential privacy on macos 10.12, CoRR abs/1709.02753.
\newblock \href {http://arxiv.org/abs/1709.02753} {\path{arXiv:1709.02753}}.

\bibitem{dwangppml18}
D.~Wang, A.~Smith, J.~Xu, High dimensional sparse linear regression under local
  differential privacy: Power and limitations, 2018 NIPS workshop in
  Privacy-Preserving Machine Learning.

\bibitem{duchi2018right}
J.~C. Duchi, F.~Ruan, The right complexity measure in locally private
  estimation: It is not the fisher information, arXiv preprint
  arXiv:1806.05756.

\bibitem{ullman2018tight}
J.~Ullman, Tight lower bounds for locally differentially private selection,
  arXiv preprint arXiv:1802.02638.

\bibitem{ge2018minimax}
J.~Ge, Z.~Wang, M.~Wang, H.~Liu, Minimax-optimal privacy-preserving sparse pca
  in distributed systems, in: International Conference on Artificial
  Intelligence and Statistics, 2018, pp. 1589--1598.

\bibitem{dwangglobalsip18}
D.~Wang, M.~Huai, J.~Xu, Differentially private sparse inverse covariance
  estimation, in: 2018 {IEEE} Global Conference on Signal and Information
  Processing, GlobalSIP 2018, Anaheim, CA, USA, November 26-29, 2018.

\bibitem{kamath2018privately}
G.~Kamath, J.~Li, V.~Singhal, J.~Ullman, Privately learning high-dimensional
  distributions, arXiv preprint arXiv:1805.00216.

\bibitem{joseph2018locally}
M.~Joseph, J.~Kulkarni, J.~Mao, Z.~S. Wu, Locally private gaussian estimation,
  arXiv preprint arXiv:1811.08382.

\bibitem{karwa2017finite}
V.~Karwa, S.~Vadhan, Finite sample differentially private confidence intervals,
  arXiv preprint arXiv:1711.03908.

\bibitem{gaboardi2018locally}
M.~Gaboardi, R.~Rogers, O.~Sheffet, Locally private mean estimation: Z-test and
  tight confidence intervals, arXiv preprint arXiv:1810.08054.

\bibitem{kareemppml18}
K.~Amin, T.~Dick, A.~Kulesza, A.~M. Medina, S.~Vassilvitskii, Private
  covariance estimation via iterative eigenvector sampling, 2018 NIPS workshop
  in Privacy-Preserving Machine Learning.

\bibitem{dwork2014analyze}
C.~Dwork, K.~Talwar, A.~Thakurta, L.~Zhang, Analyze gauss: optimal bounds for
  privacy-preserving principal component analysis, in: Proceedings of the 46th
  Annual ACM Symposium on Theory of Computing, ACM, 2014, pp. 11--20.

\bibitem{cai2012optimal}
T.~T. Cai, H.~H. Zhou, et~al., Optimal rates of convergence for sparse
  covariance matrix estimation, The Annals of Statistics 40~(5) (2012)
  2389--2420.

\bibitem{tao2012topics}
T.~Tao, Topics in random matrix theory, Vol. 132, American Mathematical Soc.,
  2012.

\bibitem{tropp2015introduction}
J.~A. Tropp, et~al., An introduction to matrix concentration inequalities,
  Foundations and Trends{\textregistered} in Machine Learning 8~(1-2) (2015)
  1--230.

\bibitem{bickel2008covariance}
P.~J. Bickel, E.~Levina, et~al., Covariance regularization by thresholding, The
  Annals of Statistics 36~(6) (2008) 2577--2604.

\bibitem{golub2012matrix}
G.~H. Golub, C.~F. Van~Loan, Matrix computations, Vol.~3, JHU Press, 2012.

\bibitem{papoulis1965probability}
A.~Papoulis, Probability, random variables, and stochastic processes.

\bibitem{whittle1960bounds}
P.~Whittle, Bounds for the moments of linear and quadratic forms in independent
  variables, Theory of Probability \& Its Applications 5~(3) (1960) 302--305.

\bibitem{dunford1958linear}
N.~Dunford, J.~T. Schwartz, Linear operators part I: general theory, Vol.~7,
  Interscience publishers New York, 1958.

\bibitem{dwangnips18}
D.~Wang, M.~Gaboardi, J.~Xu, \href{http://arxiv.org/abs/1802.04085}{Empirical
  risk minimization in non-interactive local differential privacy revisited},
  Advances in Neural Information Processing Systems 31: Annual Conference on
  Neural Information Processing Systems 2018, 3-8 December 2018, Montreal, QC,
  {Canada}\href {http://arxiv.org/abs/1802.04085} {\path{arXiv:1802.04085}}.
\newline\urlprefix\url{http://arxiv.org/abs/1802.04085}

\end{thebibliography}

\end{document}